\theoremstyle{plain}
\newtheorem{theorem}{Theorem}[section]
\newtheorem{lemma}[theorem]{Lemma}
\newtheorem{proposition}[theorem]{Proposition}
\theoremstyle{definition}
\newtheorem{definition}[theorem]{Definition}
\theoremstyle{remark}
\newcommand{\lms}{ \{  }
\newcommand{\rms}{ \}  }
\newcommand{\RR}{\mathbb{R}}
\newcommand{\M}{\mathcal{M}}
\title{On the (Non) Injectivity of Piecewise Linear Janossy Pooling}
\author{%
  Ilai~Reshef\\
  Faculty of Computer Science\\
  Technion - Israel Institute of Technology\\
  Haifa, Israel \\
  \texttt{ilai.reshef@campus.technion.ac.il} \\
  \And
  Nadav~Dym \\
  Faculty of Mathematics\\
  Faculty of Computer Science\\
  Technion - Israel Institute of Technology\\
  Haifa, Israel \\
    \texttt{nadavdym@technion.ac.il} \\
}
\begin{document}

\maketitle

\begin{abstract}

Multiset functions, which are functions that map multisets to vectors, are a fundamental tool in the construction of neural networks for multisets and graphs. To guarantee that the vector representation of the multiset is faithful, it is often desirable to have multiset mappings that are both injective and bi-Lipschitz. Currently, there are several constructions of multiset functions achieving both these guarantees, leading to improved performance in some tasks but often also to higher compute time than standard constructions. Accordingly, it is natural to inquire whether simpler multiset functions achieving the same guarantees are available. In this paper, we make a large step towards giving a negative answer to this question. We consider the family of \(k\)-ary Janossy pooling, which includes many of the most popular multiset models, and prove that no piecewise linear Janossy pooling function can be injective. On the positive side, we show that when restricted to multisets without multiplicities, even simple deep-sets models suffice for injectivity and bi-Lipschitzness. 
\end{abstract}

\section{Introduction}
A natural requirement of  machine learning models for graphs and point clouds is that they respect the permutation symmetries of the data. A key tool to achieve this is the process of mapping multisets, which are unordered collections of vectors,  to a single (ordered) vector which faithfully represents the multiset. 

The celebrated deep-sets paper \cite{deepsets} proposed a  simple and popular method to map multisets to vectors  via elementwise application of a function $f$, followed by  sum pooling, namely
\begin{equation}\label{eq:deepsets}
F(\lms \mathbf{x}_1,\ldots,\mathbf{x}_n \rms)=\sum_{j=1}^n f(\mathbf{x}_j) .
\end{equation}
Another popular alternative, which is more computationally demanding but also more expressive \citep{zweig2022exponential}, sums a function $f(x_i,x_j)$ over all pairs of points 
\begin{equation}\label{eq:2ary}
F(\lms \mathbf{x}_1,\ldots,\mathbf{x}_n \rms)=\sum_{i,j=1}^n f(\mathbf{x}_i,\mathbf{x}_j) 
\end{equation}
This type of pairwise summation allows incorporation of relational pooling \citep{relational}, or attention mechanisms as proposed in the set-transformer paper \citep{lee2019set}. 

A natural generalization of both these models is the notion of $k$-ary Janossy pooling, where a function $f$ is applied to all $k$-tuples of the multiset and then summation is applied to all these $k$-tuples. Deep sets model correspond to the case $k=1$ while set transformers correspond to $k=2$. Janossy pooling for general $k$ was successfully used in \cite{murphy2018janossy}. 

To ensure the quality of the vector representation of the multiset, a common requirement is that the function $F$ is injective. This requirement enables construction of maximally expressive message passing neural networks \citep{xu2018powerful,morris2019weisfeiler}, and is exploited in a variety of other scenarios where expressivity of graph neural networks is analyzed \citep{maron2019provably,hordan2024weisfeiler,sverdlov2025on,zhang2024expressive}

The inejctivity  requirement can be satisfied even by deepsets models, providing that the function $f:\RR^d \to \RR^m$  in \eqref{eq:deepsets} is defined correctly, and the embedding dimension $m$ is large enough. The various aspects of this question are discussed in \cite{wagstaff2022universal,deepsets,xu2018powerful,amir2023neural,tabaghi2024universal,wang2024polynomial}. Most relevant to our discussion are the recent results which show that \eqref{eq:deepsets} can be injective when $f$ is a neural network with smooth activations, but can never be injective when $f$ is a \emph{Continuous Piecewise Linear (CPwL)} function (as is the case when $f$ is a  neural network with ReLU activations).

While these theoretical results seem to indicate an advantage of smooth functions $f$ over CPwL ones, empirical evidence indicates that the separation between multisets via smooth activations can be very weak \cite{pmlr-v235-bravo24a,hordan2024complete}, and that empirically the separation obtained even by non-injective CPwL deep sets model is often  preferable. Thus, recent papers have argued that a more refined notion of separation is necessary, via the notion of bi-Lipschitz stability \cite{davidson2025on,amir2025fourier,balan2022permutation}. In this notion, multisets are required not only to be mapped to distinct vectors, but we also require that the distance between the vector representations resembles the natural Wasserstien distance between the multisets. 

In the lens of bi-Lipschitz stablility, the ranking of CPwL and smooth multiset functions are reversed. In fact, \cite{amir2023neural} and \cite{cahill2024bilipschitz} showed that smooth multiset functions can never be bi-Lipshitz. In contrast, while CPwL deepsets functions are not injective (or bi-Lipschitz),  several recent papers have suggested new CPwL multiset-to-vector mappings based on sorting \citep{balan2022permutation,dymGortler,balan2023ginvariant}, Fourier sampling of the quantile function \cite{amir2025fourier}, or max filters \cite{cahill2022group}, and  showed that they are both injective and bi-Lipschitz. In fact, \cite{sverdlov2024fswgnn} showed that  CPwL multiset functions which are injective are automatically also bi-Lipschitz.

Experimentally, it was shown that  these CPwL bi-Lipschitz multiset mappings have significant advantages over standard methods, for tasks like learning Wasserstein distances or learning in a low parameter regime \citep{amir2025fourier} and for graph learning tasks \cite{davidson2025on} including reduction of oversquashing \cite{sverdlov2024fswgnn}. On the other hand, these methods are typically more time consuming than standard methods, and at least at the time this paper is written they are not as prevalent as deepsets and set transformers. 

The goal of this paper is to address the following question

\textbf{Main Question:}  Is it possible to construct CPwL injective (and bi-Lipschitz) functions via $k$-ary pooling?

 Currenlty, we have a negative answer to this question only in the special case where $k=1$ (deepsets), but for $k\geq 2$ (e.g. set transformers) the answer is unknwon. A positive answer to this question would potentially lead to new bi-Lipschitz models which are closer to established models like set transformers, and potentially would have better performance. A negative answer would indicate that bi-Lipschitz models do require different types of multiset functions, such as the sort based functions currenly suggested in the literature.

\paragraph{Main Results}
Our theoretical analysis reveals two main results. The first result is that in full generality, the answer  to our main question is negative: $k$-ary Janossy pooling is not injective, except in the unrealistic scenario where $k$ is equal to the full cardinality of the multiset.  This result strengthens the argument for using sorting based bi-Lipchitz mappings as in \citep{davidson2025on,amir2025fourier,sverdlov2024fswgnn}.

Our second result shows that the obstruction to injectivity is only the existence of multisets with repeated points: on a compact domain $D$ of multisets where all multisets have distinct points, even the $1$-ary  CPwL Janossy pooling (deepsets) can be injective and bi-Lipschitz. The computational burden of this construction strongly depends on a constant $R(D)$ which measures how close multisets in $D$ are to have repeated points.   This result suggests that the advantages provided by sort-based methods may only be relevant in datasets where (near) point multiplicity occurs (e.g. point cloud samples of surfaces where points are very close together), and not in datasets where points are typically fairly far away, such as multisets which describe small molecules.  

\begin{figure}[h]
    \centering
    \includegraphics[width=0.6\columnwidth]{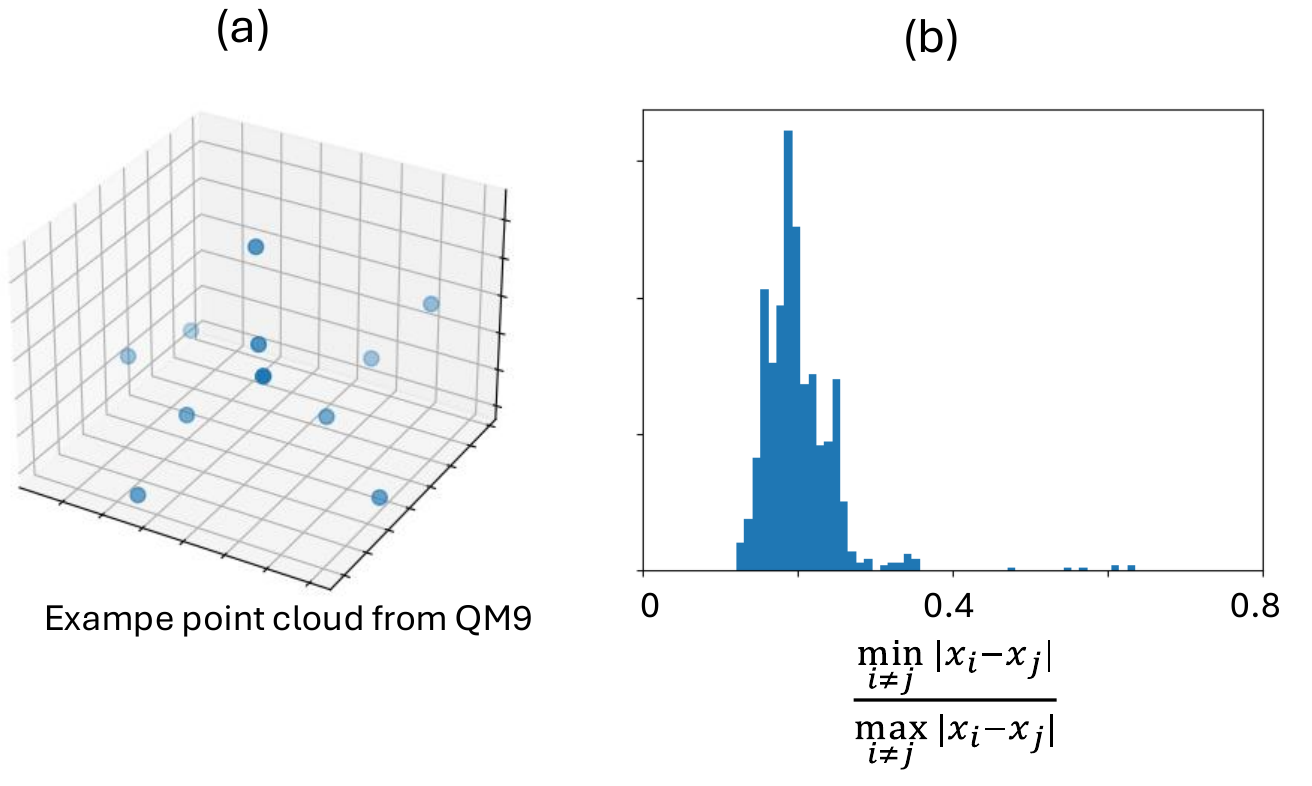}
    \caption{\it This figure illustrates that the assumption that multisets do not have (near)-repeated points is realistic for small molecule datasets. In (a) we show an example multiset from the QM9 \citep{qm9, ramakrishnan2014quantum} small molecule dataset. This example shows visually that different set elements are not very close together. In (b) we see the statistics of the minimal distance within each multiset (normalized by the maximal distance), over 1,000 represetative samples from QM9. In all these instances, the minimal distance was never lower than $0.1$.   }
    \label{fig:separable}
\end{figure}

\paragraph{Paper Structure}
In Section \ref{sec:problem} we will lay out the formal definitions needed to fully define our problem. In Section \ref{sec:non} we state and prove the non-injectivity of CPwL Janossy pooling for general domains, and in Section \ref{sec:yes} we state and prove the injectivity results for domains with non-repeated elements. Conclusions, Limitations and future work are discussed in Section \ref{sec:future}. 

\section{Problem Statement}\label{sec:problem}
We begin by formally stating the notions necessary to define our problem. For arbitrary sets \(C,Y\),  we say that a function \(F:C^n \to Y\) is permutation invariant if  \(F(\mathbf{w}_1,\ldots,\mathbf{w}_n) = F(\mathbf{w}_{\pi(1)}, \ldots,\mathbf{w}_{\pi(n)})\) for every permutation \(\pi\) of the coordinates of \(\mathbf{w} \in C^n\). 

The notion of permutation invariant functions is closely linked to the notion of functions on multisets. A multiset $\lms \mathbf{w}_1,\ldots,\mathbf{w}_n \rms$ is a collection of elements which is unordered (like sets), but where repetitions are allowed (unlike sets). We denote the space of multisets by  \(\mathcal{M}_n(C)\).  

If \(F\) is permutation invariant, we can identify it with a function on multisets in \(\M_n(C) \) via 
\[F\left( \lms \mathbf{w}_1,\ldots,\mathbf{w}_n \rms  \right)=F(\mathbf{w}_1,\ldots,\mathbf{w}_n) \]
Since $F$ is permutation invariant, this expression is well defined and does not depend on the ordering. Conversely, any multiset function $F$ on $\M_n(C)$ can be used to define a permutation invariant function on $C^n$. Due to this identification, we will use the term 'multiset function' and 'permutation invariant function' alternatingly, according to convenience. 

In this paper, our main focus is on permutation invariant functions defined by \(k\)-ary Janossy pooling. Namely, for some natural numbers $k\leq n$, and a function $f:C^k \to Y $, we define a permutation invariant function $F:C^n \to Y$ via
\begin{equation} \label{eq:janossy} F(\mathbf{x}_1,\dots,\mathbf{x}_n) = \frac{1}{(n-k)!}\sum_{\pi \in S_n} f \left(\mathbf{x}_{\pi(1)},\dots,\mathbf{x}_{\pi(k)} \right) \end{equation}
As mentioned above, special cases of Janossy pooling include the deep sets model in \eqref{eq:deepsets}, which corresponds to the case $k=1$, and set transformer models which correspond to the case $k=2$ \eqref{eq:2ary}.

As discussed in the introduction, we will focus on the case where the function $f$ used to define the Janossy pooling is \emph{Continuous Piecewise Linear (CPwL)}. To define this, we recall that a (closed, convex) polytope $P$  is a subset of $\RR^d$ defined by a
finite number of weak inequalities \[P=\{\mathbf{x}\in \RR^d| \mathbf{a}_j\cdot \mathbf{x}+b_j\geq 0, \forall j=1,\ldots,J \}\]
A partition of $\RR^d$ is a finite collection of polytopes with non-empty interior,  whose union covers $\RR^d$ and whose interiors do not intersect. A CPwL 
function $f:\RR^d \to \RR^m$ is a continuous function satisfying that, for some partition 
$\mathcal{P}=\{P_1,\ldots,P_k \}$, the restriction of $f$ to each polytope $P_j$ in the partition is an affine function. The polytopes $P_j$ are called linear regions of $f$.  Neural networks defined by piecewise linear activations like ReLU or leaky ReLU are important examples of CPwL functions. 	

Finally, a multiset function $F:\M_n(C)\to Y $ is \emph{injective} if it is injective in the standard sense: for all distinct multisets $W,W'\in \M_n(C)$ we have  $F(W)\neq F(W')$. As discussed in the introduction,  the question we discuss in this paper is the injectivity of $F$ induced from Janossy pooling of a CPwL function $f$.



\section{Non-injectivity of Janossy Pooling for general domains}\label{sec:non}

Now we can state our main theorem:

\begin{theorem}\label{theorem:non-injective-janossy}[Non-Injectivity of \(k\)-ary Janossy Pooling of CPwL functions]
Let \(C\) be a subset of \( \mathbb{R}^d\) that contains a line segment (usually this will be \([0,1]^d\) or \(\mathbb{R}^d\) itself). Let \(f:(\mathbb{R}^d)^k \rightarrow \mathbb{R}^m\) be a continuous piecewise linear (CPwL) function. Let \(n>k\), and let \(F:(\mathbb{R}^d)^n \rightarrow \mathbb{R}^m\) be the \(k\)-ary Janossy pooling of \(f\). Then \(F\) is not injective on \(\mathcal{M}_n(C)\).
\end{theorem}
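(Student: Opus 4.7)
The plan is to restrict attention to multisets whose elements lie on a line segment $L \subset C$ and to find an open cube of parameters on which the induced Janossy sum collapses to depend on a single linear functional. Parametrize $L$ as $x(t) = y_0 + tv$ for $t \in [a,b]$, and consider the composition $g(t_1,\ldots,t_k) := f(x(t_1),\ldots,x(t_k))$ from $[a,b]^k$ to $\RR^m$, which is CPwL as the composition of a CPwL map with an affine one. The main geometric step is to choose $t_0 \in (a,b)$ so that the diagonal point $(t_0,\ldots,t_0) \in [a,b]^k$ lies in the interior of some linear region $P$ of $g$. By continuity of the polytopal partition, there is then a small open interval $I \ni t_0$ with $I^k \subset P$, so that for any parameters $t_1,\ldots,t_n \in I$ and any ordered $k$-tuple $(i_1,\ldots,i_k)$ of distinct indices from $\{1,\ldots,n\}$, the value $g(t_{i_1},\ldots,t_{i_k})$ is given by the single affine form $\sum_{j=1}^k A_j t_{i_j} + b$ attached to $P$, with coefficients $A_1,\ldots,A_k,b \in \RR^m$.

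Setting $h(t_1,\ldots,t_n) := F(x(t_1),\ldots,x(t_n))$ and summing over all ordered $k$-tuples gives $h(t) = \sum_{(i_1,\ldots,i_k)}\sum_{j=1}^k A_j t_{i_j} + \frac{n!}{(n-k)!}b$. The key combinatorial collapse is the following: each index $\ell \in \{1,\ldots,n\}$ occupies the $j$-th slot in exactly $\frac{(n-1)!}{(n-k)!}$ ordered $k$-tuples, independently of $j$. Rearranging therefore yields $h(t) = \frac{(n-1)!}{(n-k)!}\bigl(\sum_{j=1}^k A_j\bigr)\bigl(\sum_{\ell=1}^n t_\ell\bigr) + \frac{n!}{(n-k)!}b$, so $h$ restricted to $I^n$ depends only on the scalar $\sum_\ell t_\ell$. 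Since $n \geq 2$, I can produce two distinct multisets in $\M_n(L)$ with a common parameter sum --- for instance the constant multiset $W_1 = \lms x(t_0),\ldots,x(t_0) \rms$ and $W_2 = \lms x(t_0-\epsilon/2), x(t_0+\epsilon/2), x(t_0),\ldots,x(t_0) \rms$ for small $\epsilon>0$ with $t_0 \pm \epsilon/2 \in I$, both having parameter sum $n t_0$. Then $F(W_1)=F(W_2)$ while $W_1 \neq W_2$, giving the required failure of injectivity.

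The main obstacle is justifying that some $t_0$ with $(t_0,\ldots,t_0) \in \mathrm{int}(P)$ exists. Generically this is automatic because the one-dimensional diagonal of $[a,b]^k$ is a curve that avoids the finite union of codimension-one boundary facets of $g$; however, there are special CPwL choices for $f$ whose partition boundary contains the entire diagonal, the prototypical example being $f(x_1,x_2)=|x_1-x_2|$ with boundary $\{x_1=x_2\}$. In this degenerate case I would replace the fully diagonal reference point by a generic strictly ordered tuple $(y_1^0,\ldots,y_n^0) \in L^n$ with all coordinates distinct, at which every ordered $k$-subtuple lies in the interior of a region of $g$ so that $F$ is locally affine. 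Exploiting that each Jacobian block of $g$ is an $m\times k$ matrix of rank at most $k$ together with the continuity of $g$ across adjacent partition boundaries, I would argue that the local Jacobian of $F$ has rank at most $k$, strictly less than $n$. A kernel direction of this rank-deficient Jacobian, transverse to the finite $S_n$-orbit through $(y_1^0,\ldots,y_n^0)$, then yields two distinct multisets arbitrarily close to $(y_1^0,\ldots,y_n^0)$ with the same value of $F$.
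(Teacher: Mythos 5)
Your first two paragraphs are sound and, after the same reduction to a line segment that the paper uses, they give a correct (and slightly simpler) version of the paper's collapse argument \emph{provided} a $t_0$ with $(t_0,\dots,t_0)\in \mathrm{int}(P)$ exists: once all evaluated $k$-tuples sit in one linear region, the affine form factors through $\sum_\ell t_\ell$ and two distinct multisets with equal parameter sum break injectivity. The problem is that the case you relegate to a fallback is not a degenerate corner case but the actual content of the theorem. For the CPwL interaction functions one cares about (anything built from $\mathrm{ReLU}(x_i-x_j)$, $|x_i-x_j|$, attention-style comparisons), the partition boundaries of $g$ contain the whole diagonal, so no such $t_0$ exists, and your proposed repair does not work. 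The claim that at a generic strictly ordered tuple ``the local Jacobian of $F$ has rank at most $k$'' is false: the Jacobian of $h$ is a sum of $n!/(n-k)!$ matrices, each of rank at most $k$ but supported on \emph{different} sets of $k$ columns and coming from \emph{different} affine pieces of $g$, and such a sum can have full rank. Already for $k=1$ (deep sets) the columns of the Jacobian are $g'(t_1),\dots,g'(t_n)$; choosing a CPwL $g:\RR\to\RR^m$, $m\ge n$, whose derivative near $t_\ell^0$ is $\mathbf{e}_\ell$ makes this Jacobian the identity, rank $n$, even though $k=1$. This is exactly why non-injectivity of CPwL deep sets is not a local rank statement and why the known $k=1$ proof must place two distinct parameters in the \emph{same} linear region; your fallback therefore has a genuine gap, and with it the whole argument in the typical case.

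The paper closes precisely this gap with Theorem~\ref{proposition:n-choose-k-points-in-one-polytope}: for any polytope partition of $\RR^k$ there is a single polytope $P_0$ and a strictly decreasing vector $\mathbf{w}\in(0,1)^n$ (distinct coordinates, so off the diagonal) such that \emph{every} ascending $k$-subtuple of $\mathbf{w}$ lies in $\mathrm{int}(P_0)$. This is the substitute for your ``diagonal in the interior'' step, and it is the technically nontrivial part (an inductive construction of points $\mathbf{v}_0,\dots,\mathbf{v}_k$ with geometrically shrinking $\ell_1$-balls, affine independence to pin down a single polytope, and the accessibility lemma to land in its interior). Note also that once the subtuples have distinct entries one can no longer cancel via equal parameter sums alone, because the common affine form need not be symmetric; the paper instead perturbs $\mathbf{w}$ by a nonzero solution $\boldsymbol\delta$ of the $k$ homogeneous equations $\sum_{i_1<\dots<i_k}(\delta_{i_1},\dots,\delta_{i_k})=0$, which exist since $n>k$, and uses the strict ordering of both $\mathbf{w}$ and $\mathbf{w}+\boldsymbol\delta$ to conclude the multisets are distinct. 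If you prove a statement of the strength of Theorem~\ref{proposition:n-choose-k-points-in-one-polytope} (or import it), your argument goes through; without it, the proposal does not establish Theorem~\ref{theorem:non-injective-janossy}.
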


To provide intuition for the theorem, we recall the simple proof for the simple case $k=1,d=1$, provided in \cite{amir2023neural}. In this case, we find a pair of distinct points $x,y$ which are in the same linear region of $f$. In this case, the average of $x$ and $y$ is also in the same linear region, and we can use this to obtain a contradiction to injectivity 
$$F(x,y)=f(x)+f(y)=f(\frac{x+y}{2})+f(\frac{x+y}{2})=F(\frac{x+y}{2},\frac{x+y}{2}) .$$

The proof of the case $k=1,d=1$ relies on the trivial observation that we can always find a pair of numbers $(x,y)\in \RR^2$ (or more generally in $\RR^n$) whose elements are distinct, but come from the same partition which the CPwL function $f:\RR\to \RR^m$ is subordinate to. To generalize this result to $k$-ary pooling, we will need to show a similar but much stronger property: for any polytope partition of $\RR^k$, one can only find a vector in $\RR^n$ of distinct monotonely decreasing elements, such that all $k$-ary montonely ordered subvectors belong to a single polytope from the partition:

\begin{theorem}
\label{proposition:n-choose-k-points-in-one-polytope}
For every polytope partion $\mathcal{P}$ of $\RR^k$,  there exists a polytope \(P_0 \in \mathcal{P}\) and a point \(\mathbf{w} = (w_1,\dots , w_n) \in (0,1)^n\) such that \(w_1 > \dots > w_n\) and, for any ascending \(k\)-tuple of indices \(i_1<\dots<i_k\) in \([n]\), the point
	\( (w_{i_1},\dots,w_{i_k})\) is in  \( \mathrm{int}(P_0)\).
\end{theorem}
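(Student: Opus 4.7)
The plan is to construct $\mathbf{w}$ as a perturbation of a diagonal point $(a,\ldots,a) \in \mathbb{R}^k$ using an exponential spacing, forcing all $\binom{n}{k}$ projected $k$-tuples to land in a common chamber of the hyperplane arrangement underlying $\mathcal{P}$. First I would replace $\mathcal{P}$ by its bounding hyperplane arrangement $\mathcal{H}$: the finite set of affine hyperplanes supporting the facets of polytopes in $\mathcal{P}$. The complement $\mathbb{R}^k \setminus \bigcup \mathcal{H}$ is a disjoint union of open chambers, and since each chamber is open, connected, and disjoint from every polytope boundary (which consists of pieces of hyperplanes in $\mathcal{H}$), a standard connectedness argument shows it is contained in the interior of a unique polytope of $\mathcal{P}$. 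So it suffices to find $\mathbf{w}$ with all $\binom{n}{k}$ projected points lying in one chamber.

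I would then try the ansatz $w_j = a - \epsilon M^j$ with parameters $a \in (0,1)$, $M > 1$, and $\epsilon < a M^{-n}$, which automatically gives $\mathbf{w} \in (0,1)^n$ with $w_1 > \cdots > w_n$. For a hyperplane $H \in \mathcal{H}$ with equation $\sum_{j=1}^k a_j x_j = b$, the signed value of $(w_{i_1},\ldots,w_{i_k})$ at $H$ equals $a\sum_j a_j - b - \epsilon \sum_j a_j M^{i_j}$. I would split $\mathcal{H} = \mathcal{H}_0 \sqcup \mathcal{H}_1$ according to whether $H$ contains the diagonal (equivalently, $\sum a_j = 0$ and $b = 0$). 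For $H \in \mathcal{H}_1$ the diagonal meets $H$ in at most one point, so by avoiding a finite set of $a$-values I force $a\sum_j a_j - b$ to be a definite nonzero constant $d_H$, and for $\epsilon$ small relative to $M$ the $\epsilon$-term is negligible, so all projected points share sign $d_H$.

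The delicate case is $H \in \mathcal{H}_0$, where the signed value reduces to $-\epsilon \sum_j a_j M^{i_j}$. The key observation is that for $M$ large this sum is dominated by its highest-exponent nonzero term $a_{J^*} M^{i_{J^*}}$, where $J^* = \max\{j : a_j \neq 0\}$. Because the indices satisfy $i_1 < \cdots < i_k$, the position $J^*$ depends only on $H$ and not on the tuple $I$, so $\mathrm{sign}(a_{J^*})$ determines the sign uniformly across all $\binom{n}{k}$ projections. Choosing parameters in the order ``$a$ generic, then $M$ large, then $\epsilon$ small'' therefore places all projected points on the same side of every $H \in \mathcal{H}$, hence in a single chamber, hence in the interior of a common polytope $P_0 \in \mathcal{P}$.

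The main obstacle, and the reason exponential rather than linear spacing is needed, lies precisely in this $\mathcal{H}_0$ case: the naive choice $w_j = a - j\epsilon$ genuinely fails in general. For instance, for the hyperplane $x_1 - 2x_2 + x_3 = 0$ the signed value becomes a positive multiple of $-(i_1 - 2i_2 + i_3)$, and this expression changes sign across ascending triples (e.g.\ $(1,2,4)$ gives $+1$ while $(1,3,4)$ gives $-1$), scattering the projections into different chambers. The geometric role of the exponential spacing is to let the ``last nonzero coordinate'' of each diagonal-containing hyperplane dominate, which is consistent across all tuples because the tuples are sorted.
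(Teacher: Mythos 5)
Your proposal is correct, but it takes a genuinely different route from the paper's proof. The paper works intrinsically with the partition: starting from a diagonal point $\mathbf{v}_0=(x,\dots,x)$ it inductively builds points $\mathbf{v}_i=\mathbf{v}_{i-1}+\tfrac{\epsilon_i}{2}\mathbf{e}_i$ using shrinking $\ell_1$-balls, so that $\mathrm{POLY}(\mathbf{v}_k)\subset\dots\subset\mathrm{POLY}(\mathbf{v}_0)$; the affine independence of these $k+1$ points forces $\mathbf{v}_k$ to lie in the interior of a single polytope $P_0$, and every point $(x+y_1,\dots,x+y_k)$ with $0<y_i<\tfrac{\epsilon_1}{2}$ and $y_{i+1}/y_i\le\delta$ is then exhibited as a convex combination of $\mathbf{v}_0,\dots,\mathbf{v}_k$ with strictly positive weight on $\mathbf{v}_k$, so the Accessibility Lemma places it in $\mathrm{int}(P_0)$; the sorted $k$-subvectors of $\mathbf{w}$ inherit these conditions. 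You instead pass to the arrangement of facet-supporting hyperplanes, reduce the claim to ``all $\binom{n}{k}$ projections share one strict sign vector,'' and verify this for $w_j=a-\epsilon M^j$ by splitting hyperplanes into those not containing the diagonal (handled by a generic $a$ and then $\epsilon$ small) and those containing it (handled by letting the last nonzero coefficient dominate for $M$ large, whose sign is tuple-independent precisely because the tuples are ascending). Both proofs exploit the same phenomenon---exponentially graded perturbations off the diagonal keep all sorted subtuples together---but yours buys an explicit closed-form witness and avoids the nested-ball induction and the Accessibility Lemma, at the price of the (standard) chamber-to-interior reduction and some bookkeeping you should make explicit in a full write-up: discard degenerate defining inequalities with zero normal vector, and record that the dominance threshold on $M$ (namely $M$ exceeding $\sum_j|a_j|/|a_{J^*}|$ for each diagonal-containing hyperplane) and the smallness condition on $\epsilon$ are uniform over the finitely many hyperplanes and over all tuples, so the three parameters can indeed be fixed in the stated order.
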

A visualization of the property described in the theorem is provided in Figure \ref{fig:property} for the special case $k=2,n=3$. 


\begin{figure}
\floatbox[{\capbeside\thisfloatsetup{capbesideposition={right,top},capbesidewidth=5cm}}]{figure}[\FBwidth]
{\caption{\it Visualization of the property from Theorem \ref{proposition:n-choose-k-points-in-one-polytope} for a partition of $[0,1]^2$ into four squares. The point $\mathbf{w}=(3/8,2/8,1/8)$ fulfills the conditions of the proposition, as all three $2$ dimensional ordered subvectors are in the same square (see orange dots). The vector $(7/8,6/8,1/8) $ does not fulfill the condition (see green dots)}\label{fig:property}}
{\includegraphics[width=7 cm]{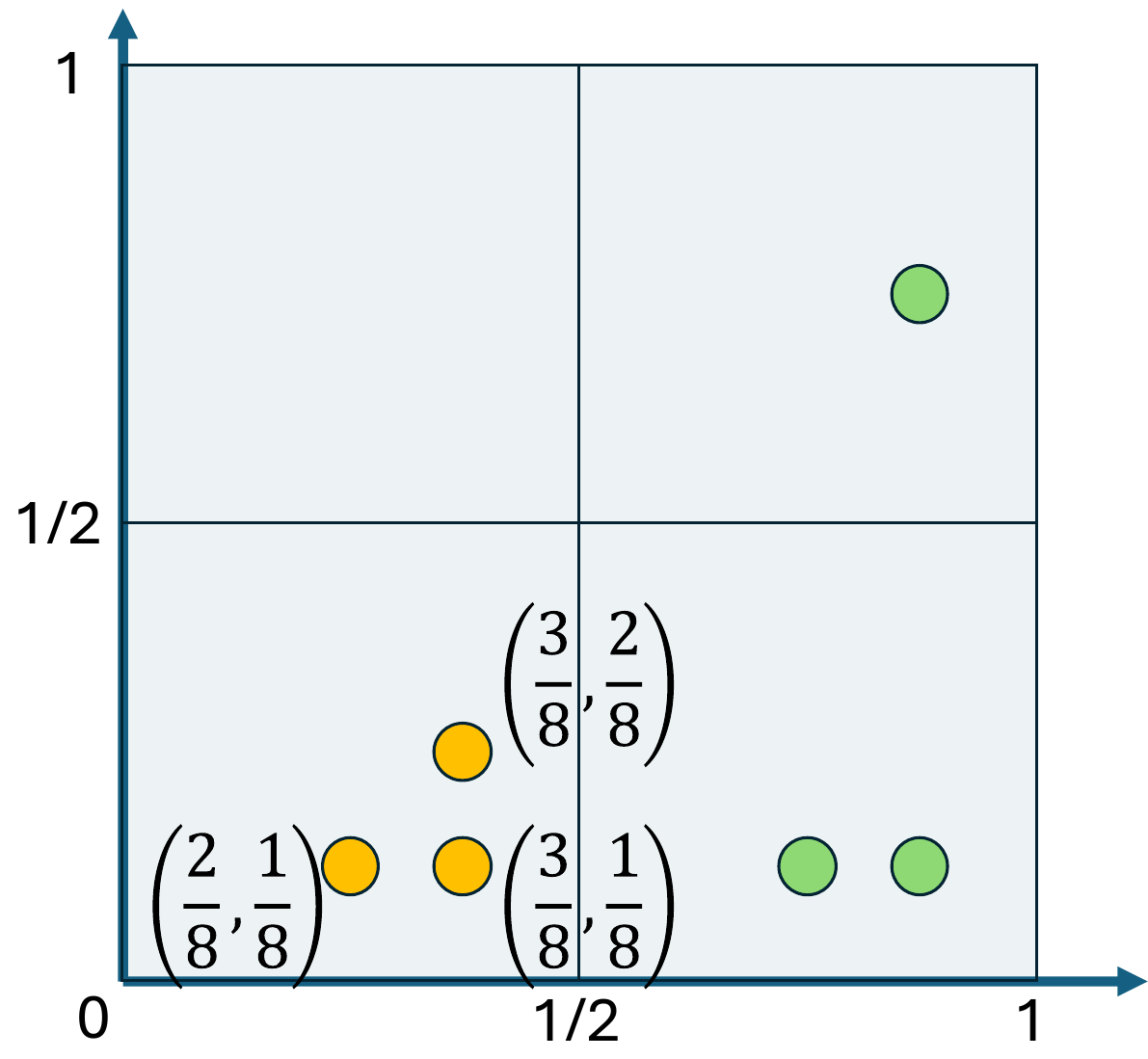}}
\end{figure}

To the best of our knowledge, Theorem \ref{proposition:n-choose-k-points-in-one-polytope} has not previously been known. The proof of this result is technical and non-trivial and it is given in the Appendix.

We now explain how this proposition can be used to prove Theorem~\ref{theorem:non-injective-janossy}.

\begin{proof}[Proof of Theorem~\ref{theorem:non-injective-janossy}]

For the sake of simplicity we first prove the theorem in the case $d=1$. We will later easily generalize this result to the general case $d>1$. We assume WLOG that \(C=[0,1]\).

Let \(f: \mathbb{R}^k \rightarrow \mathbb{R}^m\) be a CPwL function. Let \(F:\mathbb{R}^n \rightarrow \mathbb{R}^m\) be its \(k\)-ary Janossy pooling as in \eqref{eq:janossy}.  Our goal is to prove that \(F\) is not injective on multisets \( \mathcal{M}_n([0,1])\).

The first step is to show that $F$ can be defined alternatively by applying Janossy pooling to the permutation invariant function  
 \(\hat{f}:\mathbb{R}^k \rightarrow \mathbb{R}^m\) defined by
\[\hat{f}(x_1,\dots,x_k) = \sum_{\pi \in S_k} f \left(x_{\pi(1)},\dots,x_{\pi(k)} \right)\]

Note that \(\hat{f}\) is a permutation-invariant function and that we can equivalently write

\[F(x_1,\dots,x_n) = \sum_{1 \leq i_1 < \dots < i_k \leq n} \hat{f} \left(x_{i_1},\dots,x_{i_k} \right)\]

Summing over all \(\binom{n}{k}\) subsets of \([n]\) of size \(k\).

Note that \(\hat{f}\) is the sum of finitely many CPwL functions; therefore, it is itself a CPwL function. Let \(\mathcal{P}\) be a finite polytope covering of \([0,1]^k\) such that for all polytopes \(P \in \mathcal{P}\), \(\hat{f} \big|_P\) is an affine function. 
Let $P_0$ be as promised from Theorem \ref{proposition:n-choose-k-points-in-one-polytope}, and let \( A \in \mathbb{R}^{m \times k}, \mathbf{b}\in \mathbb{R}^m \) such that \(\hat{f} \big|_{P_0}(\mathbf{z}) = A\mathbf{z}+\mathbf{b}\). The properties of the point $\mathbf{w}$ in the theorem are preserved under small perturbations. Namely, for some $r>0$, we have that for all vectors $\boldsymbol\delta \in \RR^n$ with norm bounded by $r$, we will have both $w_1+\delta_1>w_2+\delta_2>\ldots> w_n+\delta_n$, and that all $k$ vectors obtained from $\mathbf{w}+\boldsymbol\delta$ by choosing $k$ different ascending indices will be in the same polytope $P_0$. It follows that for all such $\boldsymbol\delta$
\begin{align*}
F(\mathbf{w}+\boldsymbol\delta) &=  \sum_{1 \leq i_1 < \dots < i_k \leq n} \hat{f} \left(w_{i_1}+\delta_{i_1},\dots,w_{i_k}+\delta_{i_k} \right) \\
&= \binom{n}{k} \mathbf{b} + \sum_{1 \leq i_1 < \dots < i_k \leq n} A \left(w_{i_1}+\delta_{i_1},\dots,w_{i_k}+\delta_{i_k}\right)^\top \\
&= \binom{n}{k} \mathbf{b} + A \left( \sum_{1 \leq i_1 < \dots < i_k \leq n} \left(w_{i_1}+\delta_{i_1},\dots,w_{i_k}+\delta_{i_k} \right)^\top \right)
\end{align*}
To contradict injectivity we will want to obtain $F(\mathbf{w})=F(\mathbf{w}+\boldsymbol\delta)$, which will hold if 
\[\sum_{1 \leq i_1 < \dots < i_k \leq n} \left(\delta_{i_1},\dots,\delta_{i_k} \right) = \left(0,\dots,0 \right) \]
Indeed, these are $k$ linear homogenous equations in $n>k$ variables, and they have a non zero solution $\boldsymbol\delta$. We can scale this $\boldsymbol\delta$ by a sufficient small number to guarantee that $\|\boldsymbol\delta\|<r$. We then have that $F(\mathbf{w})=F(\mathbf{w}+\boldsymbol\delta)$, that $\mathbf{w}+\boldsymbol\delta\neq \mathbf{w}$, and moreover, since both $\mathbf{w}$ and $\mathbf{w}+\boldsymbol\delta$ are sorted from small to large, that $\mathbf{w}$ is not a permutation of $\mathbf{w}+\boldsymbol\delta$. Thus $F$ is not injective, and we proved the theorem in the case where $d=1$.

%
%

\textbf{When $d>1$.} 
We now prove the general case $d>1$ by a reduction to the case $d=1$. Let us assume by contradiction that $C \subseteq \RR^d$ contains the line segment between some (non-identical) points $\boldsymbol\alpha$ and $\boldsymbol\beta$, that $f$ is some CPwL function and that the function $F$ obtained by $k$-ary Janossy pooling on $f$ is injective.

Let \(g:[0,1] \to C\) be the affine function \(g(t)=(1-t)\boldsymbol\alpha+t\boldsymbol\beta\). For any natural $s$, we can extend  $g$  to a mapping  \(g^{(s)}:[0,1]^s \to (\mathbb{R}^d)^s\)  by applying \(g\) to each coordinate, i.e., for \(\mathbf{t} = (t_1, \dots, t_k) \in [0,1]^s\), \(g^{(s)}(\mathbf{t}) = (g(t_1), \dots, g(t_s))\). The function \(g^{(s)}\) is  affine and injective. Accordingly, the function $F\circ g^{(n)}:C^n\to \RR^m $ is injective, and we note that it is the Janossy pooling of $f\circ g^{(k)} $ which is a CPwL function as the composition of a CPwL function and an affine function. This leads to a contradiction to our proof for the case $d=1$.
\end{proof}

\subsection{Janossy pooling when $k=n$}
In the degenerate case where we use $n$-ary pooling for multisets of cardinality $n$, an expensive averaging over all permutations is necessary. In this case we can choose the initial $f$ we use to be a CPwL multiset injective function, such as the sorting based functions constructed in \cite{balan2022permutation} and mentioned earlier. Since the initial $f$ is already permutation invariant, and $n=k$, we would obtain $F(\mathbf{x}_1,\ldots,\mathbf{x}_n)=f(\mathbf{x}_1,\ldots,\mathbf{x}_n) $ in this case, and so Janossy pooling of CPwL functions can be injective in this degenerate case. 

\section{Injectivity under restricted domains}\label{sec:yes}
We now discuss our second main result. In contrast to the non-injectivity result presented in the previous section for general multiset domains of the form \(\mathcal{M}_n(C)\), we now show that by restricting the domain to a compact \(D \subset \mathcal{M}_n(C)\) where each multiset has \(n\) distinct elements, injective \(1\)-ary Janossy pooling is possible. To state this theorem formally, we define the natural Wasserstein metric on the space of multiset, and then the notion of a compact set in multiset-space:

\begin{definition}
    Given two multisets \(A, B \in \mathcal{M}_n(\mathbb{R}^d)\), the Wasserstein metric \(d_W(A, B)\) is defined as    
\[
d_W(A, B) = \min_{\sigma \in S_n} \sum_{i=1}^n \|\mathbf{a}_i - \mathbf{b}_{\sigma(i)}\|
\]    
where \(A = \{\mathbf{a}_1, \mathbf{a}_2, \dots, \mathbf{a}_n\}\), \(B = \{\mathbf{b}_1, \mathbf{b}_2, \dots, \mathbf{b}_n\}\), \(S_n\) is the set of all permutations of \([n]\), and \(\|\cdot\|\) denotes the \(\ell_\infty\) norm. Note that the expression above is permutation invariant, and therefore well-defined independently of the order of the elements of \(A,B\).
\end{definition}

\begin{definition}
    Let \(C\subset \mathbb{R}^d\). We say that \(D \subset \mathcal{M}_n(C)\) is compact if every sequence of multisets \(\{A_j\}_{j=1}^{\infty}\) in \(D\) has a subsequence that converges to a multiset in \(D\) with respect to the Wasserstein metric \(d_W\). This is the standard definition of compactness in a metric space. 
\end{definition}

We can now state our second main result: in the absence of multisets with repeated elements, even $1$-ary pooling is injective:

\begin{theorem} \label{thm:main_injectivity}
    Let \(D \subset \mathcal{M}_n(C)\) be a compact set of multisets where each multiset has $n$ distinct elements. Then there exists some $m=m(D)$ and a continuous piecewise linear function \(f: \mathbb{R}^d \rightarrow \mathbb{R}^m\) such that its \(1\)-ary Janossy pooling \(F(A) = \sum_{\mathbf{a} \in A} f(\mathbf{a})\) is injective on \(D\), and bi-Lipschitz with respect to the Wasserstein distance.
\end{theorem}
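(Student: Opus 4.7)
The plan is to use compactness of $D$ to extract a uniform lower bound on the pairwise separation within multisets and a bound on their support, and then to construct $f$ as a bank of CPwL \emph{threshold features along finitely many projections}. Because the projected $1$-D multisets inherit a uniform separation, CPwL threshold features can recover them exactly from sum-pooling, and finitely many well-chosen projections then determine the full $d$-dimensional multiset.

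The first step is purely compactness: $\rho(A) := \min_{i \neq j}\|a_i - a_j\|_\infty$ is continuous on $\mathcal{M}_n(C)$ with respect to $d_W$ and strictly positive on $D$, so $R := \inf_{A \in D}\rho(A) > 0$, and all elements of multisets in $D$ lie in some cube $Q = [-M, M]^d$. Next I would pick a finite set of directions $V = \{v_1,\dots,v_K\} \subset S^{d-1}$, dense enough in the sphere (an $\epsilon$-net) that for every $p \neq q \in Q$ some $v_k \in V$ satisfies $|v_k \cdot (p-q)| \geq \eta\|p-q\|_\infty$ for a fixed $\eta > 0$. For each direction, fix a threshold grid $\{t_{k,i}\}_{i=1}^{N_k}$ on the range of $v_k \cdot x$ on $Q$ with spacing strictly below $\eta R / 2$, and define
\[
f(x) \;=\; \bigl(\max(0,\, v_k \cdot x - t_{k,i})\bigr)_{k,\,i}.
\]

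For injectivity, note that the coordinates of $F(A)$ indexed by a direction $v_k$ sample $G_k(t) := \sum_{a \in A}\max(0, v_k \cdot a - t)$ on the grid. Since $G_k$ is piecewise linear in $t$ with knots exactly at the $1$-D multiset $A_k := \{v_k \cdot a : a \in A\}$, consecutive knots are at least $\eta R$ apart (by the first step and the choice of $V$), and the grid spacing is below $\eta R / 2$, each grid interval contains at most one knot and finite differences on the samples recover $A_k$ exactly. Enlarging $V$ if necessary, a generic-position plus finite-cover argument (using compactness of $D$) ensures that the joint map $A \mapsto (A_k)_{k=1}^K$ is injective on $D$, giving injectivity of $F$. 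For bi-Lipschitzness, the upper bound is automatic since $f$ is CPwL on a bounded domain. The lower bound follows because the entire reconstruction described above is a composition of Lipschitz operations---linear sampling, finite differencing on a fixed grid, and a finite matching across directions---producing a Lipschitz inverse $F^{-1} \colon F(D) \to (D, d_W)$ with constants controlled by $R$, $M$, and $V$.

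The main obstacle I expect is the step where one argues that the joint projection map $A \mapsto (A_k)_{k=1}^K$ is injective on $D$ in a \emph{quantitatively stable} way. Without the separation $R > 0$ this step fails outright: e.g.\ $\{(0,0),(1,1)\}$ and $\{(0,1),(1,0)\}$ have identical axis-aligned projections, so any finite direction family needs to explicitly rule out such phantom ambiguities. The required quantitative control is exactly where the compactness of $D$ and the separation $R$ come in, via a finite cover of the set of bad direction configurations; converting this cover into a uniform lower Lipschitz constant (rather than mere injectivity) is the delicate piece.
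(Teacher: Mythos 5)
There is a genuine gap at the central step of your injectivity argument. You claim that, for each fixed direction $v_k$, the knots of $G_k(t)=\sum_{a\in A}\max(0,v_k\cdot a-t)$ (i.e.\ the projected values $v_k\cdot a$) are at least $\eta R$ apart ``by the first step and the choice of $V$.'' But the choice of $V$ only guarantees that for every pair $p\neq q$ \emph{some} direction in $V$ separates them well; it says nothing about a \emph{given} direction. Two distinct points of $A$ whose difference is orthogonal (or nearly orthogonal) to $v_k$ project to the same (or nearly the same) value, so for a fixed finite $V$ there will in general be multisets in $D$ and directions $v_k$ for which several knots of $G_k$ fall inside one grid interval. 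In that case the grid samples of $G_k$ do not determine the projected multiset $A_k$ (a single knot with slope change $2$ can produce the same samples as two nearby knots with slope change $1$), so $F(A)$ does not determine $(A_k)_{k=1}^K$ and the reconstruction breaks. Note the separation hypothesis $R(D)>0$ is an $\ell_\infty$ separation in $\mathbb{R}^d$ and simply does not transfer to per-direction separation of projections, which is what your argument actually needs; as a sanity check, if your per-direction recovery worked as stated one would essentially be bypassing the role of $R(D)$ in a way that collides with the paper's Theorem~\ref{theorem:non-injective-janossy}. The subsequent step, that the joint map $A\mapsto(A_k)_k$ is injective for generic directions, is also only asserted (``generic-position plus finite-cover argument''); that part is true and could be patched by citing known injectivity results for sorted projections \citep{balan2022permutation,dymGortler}, but it does not repair the first link in the chain. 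Your bi-Lipschitz lower bound is likewise hand-waved (the reconstruction involves case distinctions over which interval contains a knot), though this would be fixable by invoking the fact that injective CPwL multiset maps on compact polytopal domains are automatically bi-Lipschitz, as the paper does via \cite[Lemma 3.4]{sverdlov2024fswgnn}.

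For comparison, the paper's construction uses the separation $R(D)$ in a way that is local rather than directional: it tessellates $\mathbb{R}^d$ into cubes of side $R(D)/2$ with a $\delta$-margin, so that each cube together with its margin contains at most one point of any $A\in D$; the per-cube features (a CPwL indicator that equals $1$ exactly on the cube, plus a CPwL ``localized identity'' coordinate map) then never mix contributions from two points, so each point is read off exactly from the pooled sum. If you want to pursue your projection-based route, the fix would have to handle directions along which projections of some $A\in D$ collide---e.g.\ by only trusting directions certified as separated and showing enough certified directions always remain, which is a substantially different and more delicate argument than the one you wrote.
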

We will prove this theorem by construction. We begin with some preliminaries: we first introduce the minimal separation function  \(r: D \rightarrow \mathbb{R}_{\geq 0}\) via
    \[
    r(A) = \min_{\substack{\mathbf{a}_i, \mathbf{a}_j \in A \\ i \neq j}} \|\mathbf{a}_i - \mathbf{a}_j\|
    \]
    for any multiset \(A = \{\mathbf{a}_1, \mathbf{a}_2, ..., \mathbf{a}_n\} \in D\), where \(\|\cdot\|\) denotes the \(\ell_\infty\) norm. We note that by our theorem's assumptions, $r(A)>0$ for all $A\in D$. We next define $R(D)$ to be the minimal separation obtained on all of $D$, namely
    \[
    R(D) = \inf_{A \in D} r(A) = \inf_{A \in D} \min_{\substack{\mathbf{a}_i, \mathbf{a}_j \in A \\ i \neq j}} \|\mathbf{a}_i - \mathbf{a}_j\|.
    \]
We next show that, due to the compactness of $D$, the infimum in the definition of $R(D)$ is obtained and $R(D)$ is always strictly positive.

    \begin{proposition} \label{prop:positive_R_D}
        If $D \subset \mathcal{M}_n(C)$ is a compact set of multisets, where each multiset $A \in D$ consists of $n$ distinct elements, then its minimum separation $R(D)$ is positive.
    \end{proposition}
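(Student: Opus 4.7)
The plan is to proceed by contradiction. Suppose \(R(D)=0\). Then by definition of the infimum there exists a sequence \(\{A_j\}_{j=1}^\infty \subset D\) with \(r(A_j)\to 0\). Since \(D\) is compact in the Wasserstein sense, after passing to a subsequence (which I will still denote by \(\{A_j\}\) for convenience) we may assume that \(A_j\to A^\ast\) in \(d_W\) for some \(A^\ast \in D\). The goal is to deduce that \(r(A^\ast)=0\), which will contradict the assumption that every multiset in \(D\) has \(n\) distinct elements.

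The key ingredient is continuity of the minimal separation function \(r\) with respect to \(d_W\). Writing \(A^\ast=\lms \mathbf{a}_1^\ast,\ldots,\mathbf{a}_n^\ast \rms\) and \(A_j=\lms \mathbf{a}_1^{(j)},\ldots,\mathbf{a}_n^{(j)}\rms\), the definition of the Wasserstein distance (as a minimum over \(S_n\)) guarantees that for each \(j\) there is a permutation \(\sigma_j \in S_n\) such that, after relabeling the elements of \(A_j\) by \(\sigma_j\), we have \(\sum_{i=1}^n \|\mathbf{a}_i^{(j)}-\mathbf{a}_i^\ast\| = d_W(A_j,A^\ast)\to 0\). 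In particular \(\|\mathbf{a}_i^{(j)}-\mathbf{a}_i^\ast\|\to 0\) for every \(i\). Since \(r\) is invariant under relabeling, the triangle inequality then gives
\[
\bigl|\,\|\mathbf{a}_i^{(j)}-\mathbf{a}_k^{(j)}\| - \|\mathbf{a}_i^\ast-\mathbf{a}_k^\ast\|\,\bigr| \;\leq\; \|\mathbf{a}_i^{(j)}-\mathbf{a}_i^\ast\| + \|\mathbf{a}_k^{(j)}-\mathbf{a}_k^\ast\| \;\longrightarrow\; 0
\]
for every pair \(i\neq k\). Taking the minimum over \(i\neq k\) on both sides yields \(r(A_j)\to r(A^\ast)\), i.e.\ \(r\) is continuous along the chosen subsequence.

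Combining these pieces, \(r(A^\ast)=\lim_j r(A_j)=0\). But by hypothesis \(A^\ast\in D\) consists of \(n\) distinct elements, so \(r(A^\ast)>0\), yielding the desired contradiction. Therefore \(R(D)>0\), and moreover the same argument (applied to a minimizing sequence) shows that the infimum defining \(R(D)\) is actually attained by some multiset in \(D\).

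I do not anticipate a serious obstacle here; the only subtlety is being careful that Wasserstein convergence is only up to a permutation per index \(j\), which is precisely what allows the elementwise triangle inequality above. The argument is essentially a standard compactness plus continuity argument, and both the compactness and the continuity of \(r\) behave well because \(r\) is itself permutation invariant.
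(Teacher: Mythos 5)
Your proof is correct and follows essentially the same route as the paper: extract a convergent subsequence by compactness and use the triangle inequality on the pairs matched by the optimal permutation to contradict the distinctness of the limit multiset $A^\ast$. The only cosmetic difference is that you package the triangle-inequality step as continuity of $r$ with respect to $d_W$ (giving $r(A_j)\to r(A^\ast)$), whereas the paper derives a direct numerical contradiction with a specific pair of nearby points; both are sound.
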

\begin{proof}
    Assume, for the sake of contradiction, that \(R(D) = 0\).    
    By the definition of the infimum, this implies that there exists a sequence of multisets \(\{A_j\}_{j=1}^{\infty}\) in \(D\) such that \(r(A_j) \rightarrow 0\) as \(j \rightarrow \infty\). Each $A_j = \{\mathbf{a}_1^{(j)}, \dots, \mathbf{a}_n^{(j)}\}$ consists of $n$ distinct points.

    Since \(D\) is compact, the sequence \(\{A_j\}_{j=1}^{\infty}\) has a subsequence \(\{A_{j_l}\}_{l=1}^{\infty}\) that converges to a multiset \(A^* \in D\) with respect to the Wasserstein metric \(d_W\). Let $A^* = \{\mathbf{a}_1^*, \dots, \mathbf{a}_n^*\}$. By definition of \(D\), we have \(r(A^*)>0\).

    Let \(\epsilon = \frac{r(A^*)}{2}\). From the convergence of $r(A_{j_l})$ and $A_{j_l}$, there exists an $l$ such that $r(A_{j_l})<\epsilon $ and \( d_w(A_{j_{l}} , A^*) < \epsilon\).  For this $l$, we deduce there are at least two distinct points, WLOG $a_1^{(j_l)}, a_2^{(j_l)} \in A_{j_l}$, such that $\|a_1^{(j_l)} - a_2^{(j_l)}\| < \epsilon$.
Next,  let \(\sigma \in S_n\) be the permutation such that the minimum in the definition of the Wasserstein distance between \(A^*,A_{j_{l}}\) is attained. Then, applying the triangle inequality twice, we get:
    \begin{align*}
     \frac{r(A^*)}{2} = \epsilon
     &> d_w(A_{j_{l}},A^*)\\
     &= \sum_{i=1}^n \| \mathbf{a}^{(j_{l})}_i - \mathbf{a}^*_{\sigma(i)}\|\\
     &\geq \| \mathbf{a}^{(j_{l})}_1 - \mathbf{a}^*_{\sigma(1)}\| + \| \mathbf{a}^{(j_{l})}_2 - \mathbf{a}^*_{\sigma(2)}\|\\
     &\geq
     \|  \mathbf{a}^*_{\sigma(1)} -  \mathbf{a}^*_{\sigma(2)} +\mathbf{a}^{(j_{l})}_2 -
     \mathbf{a}^{(j_{l})}_1 \| \\
     &\geq 
     \|  \mathbf{a}^*_{\sigma(1)} -  \mathbf{a}^*_{\sigma(2)} \| -\| \mathbf{a}^{(j_{l})}_1 -
     \mathbf{a}^{(j_{l})}_2 \|\\
     &\geq r(A^*) - \epsilon = \frac{r(A^*)}{2}
     \end{align*}
    This is a contradiction. We conclude that \(R(D)>0\).
\end{proof}
We now provide the construction of the function $f$.
Tessellate $\mathbb{R}^d$ with a grid of non-overlapping, adjacent $d$-dimensional hypercubes $Q_k$, each with side length $s = \frac{R(D)}{2}$.

We define a $\delta$-margin around each hypercube $Q_k$ using the $\ell_\infty$ distance. For any point $\mathbf{x}$, its $\ell_\infty$ distance to the hypercube $Q_k$ is given by $d_\infty(\mathbf{x},Q_k) = \min_{\mathbf{y} \in Q_k} ||\mathbf{x}-\mathbf{y}||_\infty$. The $\delta$-margin of $Q_k$ is then the set of points $\{\mathbf{x} \in \mathbb{R}^d \setminus Q_k \mid d_\infty(\mathbf{x},Q_k) < \delta\}$.

Let $\delta$ be a margin width chosen such that $0 < \delta < \frac{R(D)}{4}$. This ensures that the equation $(s + 2\delta) < R(D)$ is satisfied. This implies that if a hypercube $Q_k$ together with its $\delta$-margin contains a point $\mathbf{a} \in A$ (for $A \in D$), it cannot contain any other point $\mathbf{a}' \in A \setminus \{\mathbf{a}\}$. In particular, $Q_k$ itself, can contain at most one point from \(A\).

Let $\mathcal{I}$ be the finite set of indices of hypercubes $Q_k$ that intersect $C' = \bigcup_{A \in D} A \subseteq C$. Since $D$ is compact, $C'$ is bounded, ensuring $\mathcal{I}$ is finite.

For each hypercube $Q \in \{Q_k\}_{k \in \mathcal{I}}$, we define a local $(d+1)$-dimensional feature vector $f_Q(\mathbf{x})$, consisting of two components:
    
    \textbf{Indicator Component $f_{Q,\text{ind}}(\mathbf{x}) \in [0,1]$:}
    $f_{Q,\text{ind}}(\mathbf{x}) = 1$ if $\mathbf{x} \in Q$, and $f_{Q,\text{ind}}(\mathbf{x}) = \max(0, 1 - d_\infty(\mathbf{x},Q)/\delta)$ elsewhere. This ensures $f_{Q,\text{ind}}(\mathbf{x})=1 \iff \mathbf{x} \in Q$, and that the support of $f_{Q,\text{ind}}$ is precisely $Q$ together with is \(\delta\)-margin. Note that this component is CPwL.

\textbf{Relative Coordinate Component ($\mathbf{f}_{Q,\text{coords}}(\mathbf{x}) \in \mathbb{R}^d$):}
    This is a CPwL function defined by the following properties:
    \begin{itemize}
        \item If $\mathbf{x} \in Q$, then $\mathbf{f}_{Q,\text{coords}}(\mathbf{x}) = \mathbf{x}$.
        \item If $\mathbf{x}$ is located outside $Q$ and its $\delta$-margin (i.e., $d_\infty(\mathbf{x},Q) \ge \delta$), then $\mathbf{f}_{Q,\text{coords}}(\mathbf{x}) = \mathbf{0}$.
        \item In the $\delta$-margin (i.e., for $\mathbf{x}$ such that $0 < d_\infty(\mathbf{x},Q) < \delta$), $\mathbf{f}_{Q,\text{coords}}(\mathbf{x})$ interpolates continuously and piecewise linearly between the values at $\partial Q$, and \(\mathbf{0}\) at the outer boundary of the margin.
    \end{itemize}

We shall now demonstrate how a function satisfying the third condition can be constructed.
By \cite{Goodman1988}, the \(\delta\)-margin can be triangulated without introducing new vertices such that each simplex of the triangulation contains vertices belonging to both \(Q\) and the outer border of the margin.

It is well known that given a simplex in \(\mathbb{R}^d\) defined by \(d+1\) affinely independent points \(p_0, \ldots, p_d\), and corresponding values \(y_0, \ldots, y_d\), there exists a unique affine function \(h\) such that \(h(x_i) = y_i\) for all \(i\).

Applying this to our triangulated \(\delta\)-margin, we define \(f\) piecewise over each simplex by assigning the known values of \(f\) at its vertices—values from \(Q\) and zeros from the outer border. The unique affine interpolation over each simplex ensures that \(f\) transitions continuously between the identity on \(Q\) and zero on the outer region, satisfying the desired conditions.

Constructions of this sort are standard in numerical analysis and finite element methods. For instance, in the context of simplicial finite elements, the \(\mathbb{P}_1\) interpolant of a function \(v\) is the unique piecewise affine function that coincides with \(v\) at the mesh vertices (see, e.g., \cite[3.3]{BrennerScott2008}).

    The function $f_Q(\mathbf{x}) = (f_{Q,\text{ind}}(\mathbf{x}),\mathbf{f}_{Q,\text{coords}}(\mathbf{x}))$ is therefore CPwL.

The overall function $f: \mathbb{R}^d \rightarrow \mathbb{R}^m$ is the concatenation $f(\mathbf{x}) = ( \dots, f_{Q_k}(\mathbf{x}), \dots )_{k \in \mathcal{I}}$. The output dimension is $m = |\mathcal{I}| \cdot (d+1)$.

Now that we have defined the CPwL function $f$ we will use for the proof, we formally conclude the proof: 
\begin{proof}[Proof of Theorem~\ref{thm:main_injectivity}]
Let $A \in D$ be a multiset $\{ \mathbf{a}_1, \dots, \mathbf{a}_n \}$. The \(1\)-ary Janossy pooling is $F(A) = \sum_{j=1}^n f(\mathbf{a}_j)$. We show $A$ can be uniquely recovered from $F(A)$. Let $F_{Q_k, \text{ind}}$ and $\mathbf{F}_{Q_k, \text{coords}}$ be the components of $F(A)$ corresponding to $Q_k$. We first prove two simple lemmas

\begin{lemma} \label{prop:indicator-works-properly}
    \(F_{Q_k, \text{ind}}(A) = 1\) if and only if there exists a unique \(\mathbf{a}\in A\) such that \(\mathbf{a} \in Q_k\).
\end{lemma}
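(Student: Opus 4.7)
The plan is to prove both directions of the equivalence by combining two observations: first, the indicator component $f_{Q_k,\mathrm{ind}}$ is bounded in $[0,1]$ and equals $1$ exactly on $Q_k$; and second, the separation condition $s+2\delta<R(D)$ forces at most one point of any $A\in D$ to lie in $Q_k$ together with its $\delta$-margin.

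I would begin by unpacking the definition: $F_{Q_k,\mathrm{ind}}(A)=\sum_{j=1}^n f_{Q_k,\mathrm{ind}}(\mathbf{a}_j)$, where each summand is in $[0,1]$, vanishes outside $Q_k\cup(\delta\text{-margin of }Q_k)$, and equals $1$ iff $\mathbf{a}_j\in Q_k$. Next I would record the key geometric fact: the $\ell_\infty$-diameter of $Q_k$ together with its $\delta$-margin is at most $s+2\delta$, which by the choice $s=R(D)/2$ and $\delta<R(D)/4$ is strictly less than $R(D)$. Since any two distinct points of $A\in D$ are at $\ell_\infty$-distance at least $R(D)$ (by definition of $R(D)$), at most one point of $A$ can lie in $Q_k\cup(\delta\text{-margin})$; hence at most one summand in $F_{Q_k,\mathrm{ind}}(A)$ is nonzero.

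For the forward direction, assume $F_{Q_k,\mathrm{ind}}(A)=1$. By the at-most-one-nonzero-summand observation, there is a unique $\mathbf{a}\in A$ with $f_{Q_k,\mathrm{ind}}(\mathbf{a})\neq 0$ and all other summands vanish, so $f_{Q_k,\mathrm{ind}}(\mathbf{a})=1$. Since the indicator component equals $1$ only on $Q_k$ itself, we conclude $\mathbf{a}\in Q_k$, and it is the unique point of $A$ in $Q_k$. For the converse, assume some unique $\mathbf{a}\in A$ lies in $Q_k$. Then $f_{Q_k,\mathrm{ind}}(\mathbf{a})=1$, and by the separation argument no other element of $A$ can lie in $Q_k\cup(\delta\text{-margin})$, so all other summands are $0$; thus $F_{Q_k,\mathrm{ind}}(A)=1$.

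There is no real obstacle here: the entire content is the careful use of the separation inequality $s+2\delta<R(D)$ to rule out two points of $A$ lying in the extended neighborhood of $Q_k$, combined with the pointwise properties of $f_{Q_k,\mathrm{ind}}$. The only thing to state cleanly is that the $\ell_\infty$-diameter of $Q_k\cup(\delta\text{-margin})$ is bounded by $s+2\delta$, which follows directly from the definition of the $\delta$-margin under the $\ell_\infty$ metric.
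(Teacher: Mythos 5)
Your proof is correct and follows essentially the same route as the paper: both rest on the separation property $s+2\delta<R(D)$ (which the paper establishes as part of the construction and you re-derive via the diameter bound on $Q_k$ plus its $\delta$-margin) combined with the fact that $f_{Q_k,\mathrm{ind}}$ takes the value $1$ exactly on $Q_k$ and values in $(0,1)$ on the margin. The only difference is stylistic: you argue the forward direction directly from the at-most-one-nonzero-summand observation, whereas the paper phrases it as a contradiction, so there is nothing substantive to change.
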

\begin{proof}
    \textbf{(\(\Rightarrow\))} Suppose \(F_{Q_k, \text{ind}}(A) = 1\). Assume for the sake of contradiction that there is no \(\mathbf{a}\in A\) such that \(\mathbf{a} \in Q_k\). The support of \(f_{Q,\text{ind}}\) is \(Q_k\) together with its \(\delta\)-margin. For all \(\mathbf{a}\) in this margin, \(0<f_{Q,\text{ind}}(\mathbf{a}) < 1\). However, \[F_{Q_k, \text{ind}}(A) = \sum_{j=1}^n f_{Q_k,\text{ind}}(\mathbf{a}_j) = 1\]
    This implies that at least two elements of \(A\) lie in the \(\delta\)-margin of \(Q_k\), which is a contradiction to the separation condition that \(\delta\) was constructed to satisfy.

    \textbf{(\(\Leftarrow\))} Suppose there exists \(\mathbf{a} \in A\) such that \(\mathbf{a} \in Q_k\). By construction of \(s\) and \(\delta\), no other element of \(A\) lies in the support of \(f_{Q,\text{ind}}\). Therefore, 
    \[F_{Q_k, \text{ind}}(A) = f_{Q_k,\text{ind}}(\mathbf{a}) = 1\]
\end{proof}

\begin{lemma}\label{prop:coords-works-properly}
    If \(\mathbf{a} \in A\) lies in a hypercube \(Q_k\), then \[\mathbf{F}_{Q_k, \text{coords}} (A) = \mathbf{f}_{Q_k,\text{coords}}(\mathbf{a}) = \mathbf{a} \]
\end{lemma}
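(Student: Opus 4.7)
The plan is to show that only the single point $\mathbf{a} \in A \cap Q_k$ contributes to the sum $\mathbf{F}_{Q_k,\text{coords}}(A) = \sum_{j=1}^n \mathbf{f}_{Q_k,\text{coords}}(\mathbf{a}_j)$, while every other element of $A$ is annihilated because it lies outside the $\delta$-margin of $Q_k$. The conclusion then follows immediately from the first defining property of $\mathbf{f}_{Q_k,\text{coords}}$ on $Q_k$, which says $\mathbf{f}_{Q_k,\text{coords}}(\mathbf{a}) = \mathbf{a}$.

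To execute this, I would first note that by Proposition~\ref{prop:positive_R_D} and the assumption $A \in D$, every pair of distinct points $\mathbf{a}, \mathbf{a}' \in A$ satisfies $\|\mathbf{a}-\mathbf{a}'\|_\infty \geq R(D)$. The ``enlarged hypercube'' consisting of $Q_k$ together with its (open) $\delta$-margin is itself an axis-aligned hypercube of side length $s+2\delta$. Since we chose $s = R(D)/2$ and $\delta < R(D)/4$, we have $s + 2\delta < R(D)$. Consequently, the $\ell_\infty$-diameter of this enlarged region is strictly less than $R(D)$.

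Now suppose $\mathbf{a} \in A \cap Q_k$, and let $\mathbf{a}' \in A$ with $\mathbf{a}' \neq \mathbf{a}$. If $\mathbf{a}'$ were inside $Q_k$ together with its $\delta$-margin, then $\mathbf{a}$ and $\mathbf{a}'$ would both lie in this enlarged hypercube, giving $\|\mathbf{a}-\mathbf{a}'\|_\infty < R(D)$, contradicting the separation bound. Hence $d_\infty(\mathbf{a}', Q_k) \geq \delta$, so by the second defining property of $\mathbf{f}_{Q_k,\text{coords}}$ we get $\mathbf{f}_{Q_k,\text{coords}}(\mathbf{a}') = \mathbf{0}$.

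Putting this together, exactly one summand in $\mathbf{F}_{Q_k,\text{coords}}(A) = \sum_{j=1}^n \mathbf{f}_{Q_k,\text{coords}}(\mathbf{a}_j)$ is nonzero, namely the one corresponding to $\mathbf{a}$, and its value is $\mathbf{f}_{Q_k,\text{coords}}(\mathbf{a}) = \mathbf{a}$ since $\mathbf{a} \in Q_k$. This yields the claimed identity. The argument is essentially a direct consequence of the quantitative choice of $s$ and $\delta$; there is no genuine obstacle, but the one step requiring care is verifying that being outside the open $\delta$-margin is what forces $\mathbf{f}_{Q_k,\text{coords}}$ to vanish, which is exactly how the third defining property interpolates to $\mathbf{0}$ on the outer boundary.
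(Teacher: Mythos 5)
Your proof is correct and follows essentially the same route as the paper, which simply notes that the argument is analogous to the \((\Leftarrow)\) direction of Lemma~\ref{prop:indicator-works-properly}: the choice of \(s=R(D)/2\) and \(\delta<R(D)/4\) forces every element of \(A\) other than \(\mathbf{a}\) outside the support of \(f_{Q_k,\text{coords}}\), so the sum reduces to the single term \(\mathbf{f}_{Q_k,\text{coords}}(\mathbf{a})=\mathbf{a}\). You merely make explicit the separation computation that the paper leaves implicit, which is fine.
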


\begin{proof}
    The proof is similar to the direction \textbf{(\(\Leftarrow\))} in the proof of~\ref{prop:indicator-works-properly}.
\end{proof}

We now prove that \(A\) can be recovered uniquely from \(F(A)\). We do this using the following procedure. We go over all hypercubes $Q_k$. We then check whether \(F_{Q_k, \text{ind}}(A) = 1\). By  Lemma\ref{prop:indicator-works-properly} we know that this is the case if and only if $A$ contained an element in $Q_k$, and in this case the element is unique. We can now recover this element from $F_{Q_k,\text{coords}} $ using Lemma~\ref{prop:coords-works-properly}. We have thus uniquely recovered all elements of $A$. We note that if $A$ contains elements which are in the intersection of several hypercubes, this reconstruction procedure will give us the same elements of $A$ from several different hypercubes. This does not cause any issues since we know that $A$ does not contain multiplicities. 

Finally, to prove the bi-Lipschitzness of the construction: we note that the set $D$ could be covered by a finite union of polytopes (e.g. hypercubes) so that the union of all these hypercubes $\hat D$ contains $D$ but still does not contain multisets with repeated elments. As we now proved, we can construct a CPwL function $f$ so that the resulting $F$ obtained from 1-ary Janossy pooling will be injective on $\hat D$. Since $F$ and the 1-Wasserstein distance are both CPwL functions which attain the same zeros on $\hat D \times \hat D $, and $\hat D \times \hat D$ can be written a a finite union of compact polytopes,  We can apply \cite[Lemma 3.4]{sverdlov2024fswgnn} to show that $F$ is bi-Lipschitz on each polytope separately, and therefore also on the union which gives us $\hat D \times \hat D$. 
\end{proof}

\subsection{Dependence on separation}
We note that the dimension $m$ which $F,f$ map to in the construction,  depends strongly on the separation $R(D)$ and the dimension $d$. If we add the assumption that all elements of multisets $A$ are in the unit cube $[0,1]^d$, then a tesselation of side length $\sim R(D) $ would require an embedding dimension of $m\sim (1/R(D))^d $. This suggests that when $D$ contains elements which are 'almost identical', so that  $R(D)$ is small, then 1-ary pooling may not really be enough to get a good embedding with an affordable function $f$.  

As shortly discussed in the introduction, one possible example where the separation $R(D)$ is reasonably large is small molecules. To examine this, we randomly chose 1000 molecules from the QM9 \cite{qm9, ramakrishnan2014quantum} small molecule datasets. Each molecule is represented as a multiset of vectors residing in $\RR^3$. For each of these multisets, we computed the minimum distance between multiset elements, and normalized it by the maximal distance between elements. A histogram of the results is shown in Figure\ref{fig:separable}(b). We see that in all instances the ratio was not larger than $1/10$, so we can estimate that a ratio of $R(D)\approx 1/10$ could be reasonable for this type of problem. 

\section{Conclusion, limitations and future Work}\label{sec:future}
In this paper we showed two main results (a)  continuous piecewise linear Janossy pooling is not injective, when considering general domain,  and (b) on compact domains with non-repeated points, even $1$-ary continuous piecewise linear Janossy pooling can be  injective. These results suggest that deepsets models may be sufficient for tasks where multisets do not have multiplicities (so that they are sets), and the margin between closest points is significant. At the same time, when this margin is small it strengthens the case for using injective and bi-Lipschitz CPwL models such as \cite{davidson2025on,amir2025fourier}, since we show that alternative natural methods cannot attain similar theoretical guarantees. 

Building upon our positive result for 1-ary CPwL Janossy pooling on domains of sets (i.e., multisets with point multiplicities of at most one), a natural direction for future work is to explore the capacity of higher-order pooling. We conjecture that for a given integer $k \ge 1$, $k$-ary CPwL Janossy pooling can be injective on compact domains of multisets where the multiplicity of any individual element is at most $k$. 

A limitation of this work is that we only analyze the injectivity of CPwL Janossy pooling. Our focus on these functions stems from the fact that CPwL injectivity implies bi-Lipschitzness, while smooth multiset functions, which can be injective via Janossy pooling, cannot be bi-Lipschitz \cite{amir2023neural,cahill2024bilipschitz}. However, there are many functions which are neither CPwL nor smooth. An interesting avenue for future work is investigating whether such functions can be used to construct injective and bi-Lipschitz multiset functions via $k$-ary pooling, and whether these can lead to multiset models with good empirical performance. This question is most interesting for $k=2$ as $2$-ary Janossy pooling has reasonable complexity, and as for $k=1$ such a function can only exist if it is not differentiable at any point \cite{amir2023neural}.

\textbf{Acknowledgements} N.D. was supported by ISF grant 272/23.

\bibliographystyle{plainnat} 
\bibliography{main} 

\newpage
\appendix

\section{Technical Appendix}

\subsection{Proof of Theorem~\ref{proposition:n-choose-k-points-in-one-polytope}}



We first establish the following supporting result:

Let \(\mathbf{v} \in [0,1]^k\). We define \(\mathrm{POLY}(\mathbf{v}) = \{ P \in \mathcal{P} \mid \mathbf{v} \in P \}\) to be the set of polytopes in the covering \(\mathcal{P}\) that contain the point \(\mathbf{v}\).

\begin{lemma}\label{lemma:ball-in-all-polytopes}
Let \(\mathbf{v} \in \mathbb{R}^k\). Let \(\mathcal{P}\) be a finite polytope covering of \(\mathbb{R}^k\). There exists \(\epsilon>0\) such that the \(\epsilon\)-ball around \(\mathbf{v}\) w.r.t. the \(\ell_1\) metric, denoted by \(B_{\ell_1}(\mathbf{v}, \epsilon) = \{ \mathbf{x} \in [0,1]^k : \|\mathbf{x} - \mathbf{v}\|_1 < \epsilon \}\), does not intersect any polytope that does not contain \(\mathbf{v}\):

\[ B_{\ell_1}(\mathbf{v}, \epsilon) \cap \bigcup (\mathcal{P} \setminus \mathrm{POLY}(\mathbf{v})) = \emptyset \]

\end{lemma}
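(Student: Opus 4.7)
The plan is to exploit two structural facts: polytopes are closed subsets of $\mathbb{R}^k$, and the covering $\mathcal{P}$ is finite. Together these let me reduce the problem to a minimum over finitely many strictly positive distances.

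First I would fix an arbitrary polytope $P \in \mathcal{P} \setminus \mathrm{POLY}(\mathbf{v})$. By definition $\mathbf{v} \notin P$, and $P$ is closed since it is the intersection of finitely many closed half-spaces. Therefore the $\ell_1$-distance $d_1(\mathbf{v}, P) := \inf_{\mathbf{x}\in P} \|\mathbf{x}-\mathbf{v}\|_1$ is attained (on, say, the intersection of $P$ with any large enough closed $\ell_1$-ball around $\mathbf{v}$, which is compact) and is strictly positive, because the infimum of a continuous function on a nonempty compact set is a minimum, and this minimum cannot be zero without forcing $\mathbf{v}\in P$.

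Next I would use the finiteness of $\mathcal{P}$. The collection $\mathcal{P} \setminus \mathrm{POLY}(\mathbf{v})$ is finite, so the quantity
\[ \epsilon := \tfrac{1}{2}\min\bigl\{ d_1(\mathbf{v}, P) : P \in \mathcal{P}\setminus \mathrm{POLY}(\mathbf{v}) \bigr\} \]
is a minimum of finitely many positive numbers, hence $\epsilon > 0$ (with the convention that if $\mathcal{P} \setminus \mathrm{POLY}(\mathbf{v}) = \emptyset$ we may take any $\epsilon>0$). By construction, for every $P \in \mathcal{P}\setminus \mathrm{POLY}(\mathbf{v})$ and every $\mathbf{x}$ with $\|\mathbf{x}-\mathbf{v}\|_1 < \epsilon$ we have $\|\mathbf{x}-\mathbf{v}\|_1 < d_1(\mathbf{v}, P)$, so $\mathbf{x} \notin P$. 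This gives exactly $B_{\ell_1}(\mathbf{v}, \epsilon) \cap \bigcup(\mathcal{P}\setminus \mathrm{POLY}(\mathbf{v})) = \emptyset$, as desired.

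There is no serious obstacle here: the argument is a standard finite-minimum/closedness routine. The only subtlety worth double-checking is the justification that $d_1(\mathbf{v}, P) > 0$ when $\mathbf{v}\notin P$, which uses the closedness of $P$ together with compactness of a sufficiently large $\ell_1$-ball to turn the infimum into an attained minimum; since $P$ is a polytope and $\ell_1$-balls are also polytopes, this is immediate.
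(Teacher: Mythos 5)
Your argument is correct and is essentially the paper's own proof: both take half the minimum, over the finitely many polytopes not containing $\mathbf{v}$, of the (strictly positive, by closedness) distance from $\mathbf{v}$ to each such polytope. Your added justifications (attainment of the infimum via compactness, and the convention when $\mathcal{P}\setminus\mathrm{POLY}(\mathbf{v})$ is empty) are fine refinements but do not change the route.
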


\begin{proof}[Proof of Lemma~\ref{lemma:ball-in-all-polytopes}]
Let \(P \in \mathcal{P}\setminus POLY(\mathbf{v})\). Since \(P\) is closed, \[\mathrm{dist}(\mathbf{v},P) = \inf_{\mathbf{x} \in P} \left\Vert \mathbf{x}-\mathbf{v} \right\Vert > 0\]
Let
\[\epsilon  = \frac{1}{2} \min_{P \in \mathcal{P}\setminus \mathrm{POLY}(\mathbf{v})} \left(\mathrm{dist}(\mathbf{v},P) \right)\]
Since \(\mathcal{P}\) is finite, \(\epsilon\) is well defined and positive. For this choice of \(\epsilon\), we have
\[ B_{\ell_1}(\mathbf{v}, \epsilon) \cap \bigcup (\mathcal{P} \setminus \mathrm{POLY}(\mathbf{v})) = \emptyset \]\end{proof}




\begin{proof}[Proof of Theorem~\ref{proposition:n-choose-k-points-in-one-polytope}]
Fix some $x\in (0,1)$ and let \(\mathbf{v}_0 = (x,\dots,x) \in (0,1)^k\).

Using Lemma~\ref{lemma:ball-in-all-polytopes}, let  \(\epsilon_1>0\) such that \(B_{\ell_1}(\mathbf{v}_0,\epsilon_1) \subset \bigcup \mathrm{POLY}(\mathbf{v}_0)\) and \(B_{\ell_1}(\mathbf{v}_0,\epsilon_1) \subset (0,1)^k\).

Let \(\mathbf{v}_1 = \mathbf{v}_0+\frac{\epsilon_1}{2} \mathbf{e}_1 = (x+\frac{\epsilon_1}{2}, x, \dots, x)\).

We continue this construction by an inductive process. For all \(1<i \leq k\):

Let \(\epsilon_i>0\) such that \(B_{\ell_1}(\mathbf{v}_{i-1},\epsilon_i) \subset \bigcup \mathrm{POLY}(\mathbf{v}_{i-1})\) and \(\epsilon_i < \frac{\epsilon_{i-1}}{2}\).

Let \(\mathbf{v}_i = \mathbf{v}_{i-1}+\mathbf{e}_i \frac{\epsilon_i}{2} = (x+\frac{\epsilon_1}{2}, \dots, x+\frac{\epsilon_i}{2}, x, \dots, x)\).

In the end of this process we get a sequence of \(k+1\) vectors $\mathbf{v}_0,\ldots,\mathbf{v}_k$ which are all in \(\mathbb{R}^k_{\text{sorted}}:=\{\mathbf{y}\in \RR^k| \quad y_1\geq y_2 \geq \ldots \geq y_k \}\).

\begin{proposition}\label{proposition:poly-inclusion-order}
    \(\mathrm{POLY}(\mathbf{v}_k) \subset \dots \subset \mathrm{POLY}(\mathbf{v}_0)\)
\end{proposition}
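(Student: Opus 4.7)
The plan is to establish each link in the chain $\mathrm{POLY}(\mathbf{v}_i) \subset \mathrm{POLY}(\mathbf{v}_{i-1})$ separately for $i=1,\ldots,k$, and then concatenate them to obtain the full statement. For a single link, the idea is that the construction of $\mathbf{v}_i$ places it inside the $\ell_1$-ball around $\mathbf{v}_{i-1}$ for which Lemma~\ref{lemma:ball-in-all-polytopes} was invoked, so every polytope containing $\mathbf{v}_i$ must also contain $\mathbf{v}_{i-1}$.

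More concretely, I would fix $i \in \{1,\ldots,k\}$ and compute
\[
\|\mathbf{v}_i - \mathbf{v}_{i-1}\|_1 = \left\| \tfrac{\epsilon_i}{2} \mathbf{e}_i \right\|_1 = \tfrac{\epsilon_i}{2} < \epsilon_i,
\]
so that $\mathbf{v}_i \in B_{\ell_1}(\mathbf{v}_{i-1}, \epsilon_i)$. Recall that $\epsilon_i$ was chosen in the construction precisely so that the conclusion of Lemma~\ref{lemma:ball-in-all-polytopes} holds at $\mathbf{v}_{i-1}$, i.e.\ $B_{\ell_1}(\mathbf{v}_{i-1}, \epsilon_i) \cap \bigcup (\mathcal{P} \setminus \mathrm{POLY}(\mathbf{v}_{i-1})) = \emptyset$. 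Now let $P \in \mathrm{POLY}(\mathbf{v}_i)$, meaning $\mathbf{v}_i \in P$. If we had $P \notin \mathrm{POLY}(\mathbf{v}_{i-1})$, then $P$ would intersect the ball $B_{\ell_1}(\mathbf{v}_{i-1}, \epsilon_i)$ at $\mathbf{v}_i$, contradicting the above disjointness. Hence $P \in \mathrm{POLY}(\mathbf{v}_{i-1})$, and since $P$ was arbitrary, $\mathrm{POLY}(\mathbf{v}_i) \subset \mathrm{POLY}(\mathbf{v}_{i-1})$.

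Chaining these inclusions across $i=1,\ldots,k$ yields $\mathrm{POLY}(\mathbf{v}_k) \subset \mathrm{POLY}(\mathbf{v}_{k-1}) \subset \cdots \subset \mathrm{POLY}(\mathbf{v}_0)$, which is exactly the claim. There is no serious obstacle here: the inductive choice of $\epsilon_i$ in the construction was tailor-made so that Lemma~\ref{lemma:ball-in-all-polytopes} applies at each $\mathbf{v}_{i-1}$, and the step size $\epsilon_i/2$ is strictly smaller than the radius of applicability, which makes the one-step inclusion essentially immediate. The only thing to check carefully is that the perturbation is taken in a single coordinate of magnitude $\epsilon_i/2$ so that the $\ell_1$-distance (not just the $\ell_\infty$-distance) is bounded by $\epsilon_i/2$, which is what the lemma requires.
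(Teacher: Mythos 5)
Your argument is correct and is essentially the same as the paper's: both show $\|\mathbf{v}_i - \mathbf{v}_{i-1}\|_1 = \epsilon_i/2 < \epsilon_i$ so that $\mathbf{v}_i \in B_{\ell_1}(\mathbf{v}_{i-1},\epsilon_i)$, and then invoke the disjointness guaranteed by Lemma~\ref{lemma:ball-in-all-polytopes} to rule out any polytope containing $\mathbf{v}_i$ but not $\mathbf{v}_{i-1}$, chaining the one-step inclusions over $i$.
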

\begin{proof}
    Let \(i \in [k]\). Note that \(\mathbf{v}_i - \mathbf{v}_{i-1} = \frac{\epsilon_i}{2} \mathbf{e}_i \Rightarrow \mathbf{v}_i \in B_{\ell_1}(\mathbf{v}_{i-1},\epsilon_i)\). Assume, for the sake of contradiction, that there exists \(P \not\in \mathrm{POLY}(\mathbf{v}_{i-1})\) such that \(\mathbf{v}_i \in P\). Then:

    \[ \mathbf{v}_i \in  B_{\ell_1}(\mathbf{v}_{i-1}, \epsilon_i) \cap \bigcup (\mathcal{P} \setminus \mathrm{POLY}(\mathbf{v}_{i-1})) = \emptyset\]

Which is a contradiction.
\end{proof}
\begin{proposition}\label{proposition:all-v-in-same-polytope}
    There exists a single polytope \(P_0 \in \mathcal{P}\) such that \(\mathbf{v}_k \in P_0\); in particular, \(\mathbf{v}_k\) lies in the interior of this polytope.
\end{proposition}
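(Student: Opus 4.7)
The plan is to leverage the chain of inclusions $\mathrm{POLY}(\mathbf{v}_k)\subseteq \cdots \subseteq \mathrm{POLY}(\mathbf{v}_0)$ from Proposition~\ref{proposition:poly-inclusion-order} together with the observation that the stepwise construction $\mathbf{v}_i = \mathbf{v}_{i-1}+\tfrac{\epsilon_i}{2}\mathbf{e}_i$ forces the $k+1$ points $\mathbf{v}_0,\ldots,\mathbf{v}_k$ to be affinely independent in $\mathbb{R}^k$. The full-dimensional simplex these points span must sit inside every polytope containing $\mathbf{v}_k$, and the disjoint-interiors axiom of a partition will then force there to be at most one such polytope.

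Step one is to record the affine independence. Since $\mathbf{v}_i-\mathbf{v}_0=\sum_{j=1}^{i}\tfrac{\epsilon_j}{2}\mathbf{e}_j$, the $k$ displacement vectors $\mathbf{v}_1-\mathbf{v}_0,\ldots,\mathbf{v}_k-\mathbf{v}_0$ form the columns of an upper-triangular $k\times k$ matrix with strictly positive diagonal entries $\epsilon_1/2,\ldots,\epsilon_k/2$, hence are linearly independent. Consequently $\Delta:=\mathrm{conv}(\mathbf{v}_0,\ldots,\mathbf{v}_k)$ is a full-dimensional simplex in $\mathbb{R}^k$; in particular $\mathrm{int}(\Delta)\neq\emptyset$ as a subset of $\mathbb{R}^k$.

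Step two applies the chain of inclusions. For any $P\in\mathrm{POLY}(\mathbf{v}_k)$, iterating Proposition~\ref{proposition:poly-inclusion-order} gives $P\in\mathrm{POLY}(\mathbf{v}_j)$ for every $j\leq k$, so $P$ contains each $\mathbf{v}_j$. By convexity we get $\Delta\subseteq P$, and since $\mathrm{int}(\Delta)$ is a nonempty open subset of $\mathbb{R}^k$ contained in the closed set $P$, it lies inside $\mathrm{int}(P)$. If two distinct polytopes $P_1\neq P_2$ both belonged to $\mathrm{POLY}(\mathbf{v}_k)$, we would obtain $\emptyset\neq\mathrm{int}(\Delta)\subseteq\mathrm{int}(P_1)\cap\mathrm{int}(P_2)$, contradicting the disjoint-interiors property of $\mathcal{P}$. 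Combined with the fact that $\mathcal{P}$ covers $\mathbb{R}^k$, this yields $|\mathrm{POLY}(\mathbf{v}_k)|=1$, i.e.\ the desired unique polytope $P_0$.

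For the ``in particular'' clause I would exploit the finiteness of $\mathcal{P}$: any boundary point $\mathbf{x}\in\partial P_0$ is approached by a sequence of points outside $P_0$, each lying in one of the finitely many other polytopes, so pigeonhole furnishes a subsequence entirely inside some fixed $P_j\neq P_0$, whose closedness then places $\mathbf{x}$ in $P_j$ too. Uniqueness of the polytope containing $\mathbf{v}_k$ therefore forces $\mathbf{v}_k\in\mathrm{int}(P_0)$. I expect the only step deserving real care to be the passage $\mathrm{int}(\Delta)\subseteq\mathrm{int}(P)$, which hinges on $\Delta$ being $k$-dimensional: this is precisely the payoff of the construction having injected a new linearly independent direction $\mathbf{e}_i$ at each of the $k$ steps, so the upper-triangular structure of the $\mathbf{v}_i$'s is essential and not merely cosmetic.
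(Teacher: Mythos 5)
Your proposal is correct and follows essentially the same route as the paper: the inclusion chain $\mathrm{POLY}(\mathbf{v}_k)\subseteq\cdots\subseteq\mathrm{POLY}(\mathbf{v}_0)$ places all $k+1$ affinely independent points $\mathbf{v}_0,\ldots,\mathbf{v}_k$ in any polytope containing $\mathbf{v}_k$, and the disjoint-interiors property then forces uniqueness (the paper derives the contradiction by trapping the points in a hyperplane containing $P_0\cap P_1$, while you equivalently put the full-dimensional simplex's interior in both polytopes' interiors). Your pigeonhole-plus-closedness argument for $\mathbf{v}_k\in\mathrm{int}(P_0)$ is in fact spelled out more carefully than the paper's brief remark, which only rules out the boundary of the cube by checking $\mathbf{v}_k\in B_{\ell_1}(\mathbf{v}_0,\epsilon_1)\subset(0,1)^k$.
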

\begin{proof}
    Assume, for the sake of contradiction, that \(|\mathrm{POLY}(\mathbf{v}_k)|>1\). Let \(P_0,P_1 \in \mathrm{POLY}(\mathbf{v}_k)\) be two different polytopes. Convexity is preserved under intersection, therefore \(P_0 \cap P_1\) is a convex set. Under the assumption that the interiors of polytopes in \(\mathcal{P}\) do not intersect, we see that \(P_0 \cap P_1\) has an empty interior; therefore, there exists a hyperplane \(H \subset \mathbb{R}^k\) such that \(P_0 \cap P_1 \subset H\) (see \cite[2.5.2]{boyd2004convex}).
    
    By Proposition~\ref{proposition:poly-inclusion-order}, we have \(\mathbf{v}_0, \dots, \mathbf{v}_k \in P_0 \cap P_1 \subset H\); however, it is easy to see that \(\mathbf{v}_0, \dots, \mathbf{v}_k\) are \(k+1\) affinely independent vectors in \(\mathbb{R}^k\), and therefore do not all lie in the same hyperplane. This is a contradiction.

    We have proved that \(\mathbf{v}_k\) lies on a single polytope \(P_0\), therefore it can either lie in the interior of \(P_0\) or on the boundary of \([0,1]^k\); however, by the construction of each \(\epsilon_i\), and by the triangle inequality, it is easy to see that \(\mathbf{v}_k \in B_{\ell_1}(\mathbf{v}_0,\epsilon_1) \subset (0,1)^k = \text{int}([0,1]^k)\). We conclude that \(\mathbf{v}_k \in \text{int}(P_0)\).
\end{proof}

Let \(\delta =  \displaystyle \min_{1<i\leq k} \left(  \frac{\epsilon_i}{\epsilon_{i-1}} \right) < 1\).

\begin{proposition}\label{proposition:constructing-interior-points-alt}
    The interior of \(P_0\) contains all points of the form \((x+y_1, \dots, x+y_k)\) for which:
    \begin{enumerate}
    \item[(a)] All \(y_i\) are positive, and are smaller than \(\frac{\epsilon_1}{2}\).
    \item[(b)] The ratio between \(y_{i+1}\) and \(y_i\) is smaller than or equal to \(\delta\).
\end{enumerate}
Moreover, each such point is in $\mathbb{R}^k_{\text{sorted}}$.
\end{proposition}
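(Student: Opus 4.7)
The plan is to exhibit every point $(x+y_1,\dots,x+y_k)$ satisfying (a) and (b) as an explicit convex combination $\sum_{j=0}^{k}\alpha_j \mathbf{v}_j$ with strictly positive weight on $\mathbf{v}_k$. By Proposition~\ref{proposition:poly-inclusion-order} (which forces $P_0\in \mathrm{POLY}(\mathbf{v}_j)$ for every $j$, since $P_0\in \mathrm{POLY}(\mathbf{v}_k)$), all the $\mathbf{v}_j$ lie in the convex set $P_0$. By Proposition~\ref{proposition:all-v-in-same-polytope}, $\mathbf{v}_k\in \mathrm{int}(P_0)$. A standard convexity fact then upgrades membership in $P_0$ to membership in $\mathrm{int}(P_0)$ whenever the weight on $\mathbf{v}_k$ is positive.

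First I would unwind the definition of $\mathbf{v}_j$: its $i$-th coordinate is $x+\epsilon_i/2$ when $i\leq j$ and $x$ when $i>j$. Hence the $i$-th coordinate of $\sum_j\alpha_j\mathbf{v}_j$ equals $x+(\epsilon_i/2)\sum_{j\geq i}\alpha_j$. Matching against $x+y_i$ and writing $\beta_i=2y_i/\epsilon_i$ yields the unique choice
\[
\alpha_0=1-\beta_1,\qquad \alpha_i=\beta_i-\beta_{i+1}\ (1\leq i<k),\qquad \alpha_k=\beta_k.
\]
The weights sum to $1$ by telescoping. Non-negativity of $\alpha_0$ and $\alpha_k$ follow immediately from (a) (namely $y_1<\epsilon_1/2$ and $y_k>0$). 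Non-negativity of the middle $\alpha_i$'s reduces to $\beta_i\geq \beta_{i+1}$, i.e. $y_{i+1}/y_i\leq \epsilon_{i+1}/\epsilon_i$, and this is exactly the role of (b): by the definition of $\delta$ we have $\delta\leq \epsilon_{i+1}/\epsilon_i$ for every relevant $i$, so (b) gives $y_{i+1}/y_i\leq \delta\leq \epsilon_{i+1}/\epsilon_i$. Moreover $\alpha_k=\beta_k>0$.

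To upgrade to interior membership, I would invoke the following (elementary) fact: if $P$ is convex, $p\in \mathrm{int}(P)$, $q\in P$ and $t\in(0,1]$, then $tp+(1-t)q\in \mathrm{int}(P)$. Applying this with $P=P_0$, $p=\mathbf{v}_k$, $t=\alpha_k>0$, and $q=\tfrac{1}{1-\alpha_k}\sum_{j=0}^{k-1}\alpha_j\mathbf{v}_j\in P_0$ (or trivially when $\alpha_k=1$) puts the candidate point in $\mathrm{int}(P_0)$. Finally, the sortedness claim is immediate from (b): since $\delta<1$ and all $y_i>0$, $y_{i+1}\leq \delta y_i<y_i$, so $(x+y_1,\dots,x+y_k)\in\mathbb{R}^k_{\text{sorted}}$.

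The main obstacle is really only the indexing and ensuring that the ratio $\delta$ is tuned to enforce monotonicity of the $\beta_i$'s; once the identification $\beta_i=2y_i/\epsilon_i$ is made, conditions (a)–(b) translate line-by-line into the requirements for a valid convex combination with $\alpha_k>0$, and the rest is bookkeeping together with the two convexity-of-interior lemmas.
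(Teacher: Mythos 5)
Your proposal is correct and follows essentially the same route as the paper: the identical convex-combination coefficients $\alpha_0=1-2y_1/\epsilon_1$, $\alpha_i=2y_i/\epsilon_i-2y_{i+1}/\epsilon_{i+1}$, $\alpha_k=2y_k/\epsilon_k$, the same telescoping and nonnegativity checks via $\delta\leq\epsilon_{i+1}/\epsilon_i$, and the same final step of pushing the point into $\mathrm{int}(P_0)$ because the weight on $\mathbf{v}_k\in\mathrm{int}(P_0)$ is strictly positive (your ``elementary convexity fact'' is exactly the Accessibility Lemma the paper cites from Rockafellar). No gaps.
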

\begin{proof}
    By Proposition~\ref{proposition:poly-inclusion-order}, \(\mathbf{v}_0, \dots, \mathbf{v}_k \in P_0\). We will show that \((x+y_1, \dots, x+y_k)\) is a convex combination of the points \(v_0, \dots, v_k\) by finding appropriate coefficients.

        Let
    \begin{align*}
        \alpha_0 &= 1-\frac{2y_1}{\epsilon_1} \\
        \alpha_i &= \frac{2y_i}{\epsilon_i} - \frac{2y_{i+1}}{\epsilon_{i+1}} \quad \text{for} \quad 1\leq i<k\\
        \alpha_k &= \frac{2y_k}{\epsilon_k}
    \end{align*}
        First, we show that the sum of these coefficients equals \(1\):
    \[\sum_{i=0}^k  \alpha_i = \left(1-\frac{2y_1}{\epsilon_1}\right) + 
     \sum_{i=1}^{k-1} \left( \frac{2y_i}{\epsilon_i} - \frac{2y_{i+1}}{\epsilon_{i+1}} \right) + \frac{2y_k}{\epsilon_k}\]

     Notice that the terms in the summation telescope, as each \(- \frac{2y_{i+1}}{\epsilon_{i+1}}\) cancels with the corresponding \(\frac{2y_i}{\epsilon_i}\) from the next term. After cancellation, we are left with: \[ 1-\frac{2y_1}{\epsilon_1} + \frac{2y_1}{\epsilon_1}  -\frac{2y_k}{\epsilon_k} + \frac{2y_k}{\epsilon_k} = 1\]

         Second, we show that all these coefficients are nonnegative.  Clearly \(\alpha_0,\alpha_k >0\). 
     For \(1 \leq i < k\), we have:
\[ \frac{\epsilon_{i+1}}{\epsilon_i} \geq \delta \geq \frac{y_{i+1}}{y_i}\\      \] Where the RHS holds due to condition (b) on \(y_i,y_{i+1}\), and the LHS holds from the definition of \(\delta\). Consequently, \[ \alpha_i = \frac{2y_i}{\epsilon_i} - \frac{2y_{i+1}}{\epsilon_{i+1}} \geq 0      \] 

Third, we show that:
    \[\sum_{i=0}^k  \alpha_i v_i =  (x+y_1,\dots,x+y_k)\]
    Let's fix any coordinate \(1 \leq j \leq k\). Then we obtain
    \begin{align*}
        \left\langle \mathbf{e}_j, \sum_{i=0}^k  \alpha_i \mathbf{v}_i \right\rangle 
        &=
        \sum_{i=0}^k \alpha_i \langle \mathbf{e}_j , \mathbf{v}_i \rangle 
        =
        \sum_{i=0}^k \alpha_i x + \sum_{i=j}^k \alpha_i \frac{\epsilon_j}{2} \\
        &=
        x + \sum_{i=j}^{k-1} \left[ \left(\frac{2y_i}{\epsilon_i} - \frac{2y_{i+1}}{\epsilon_{i+1}} \right) \frac{\epsilon_j}{2} \right] + \left(\frac{2y_k}{\epsilon_k}\right) \frac{\epsilon_j}{2}\\
        &= 
        x + y_j
    \end{align*}

We have proved that \( (x+y_1, \dots, x+y_k) \) is a convex combination of \(\mathbf{v}_0, \dots, \mathbf{v}_k \in P_0\). By Proposition~\ref{proposition:all-v-in-same-polytope}, \(\mathbf{v}_k \in \text{int}(P_0)\). Since the coefficient of \(\mathbf{v}_k\) in this convex combination is \(\alpha_k >0\), it follows from \cite[Theorem~6.1]{rockafellar1970convex}, often referred to as the Accessibility Lemma, that \((x+y_1, \dots, x+y_k) \in \text{int}(P_0)\).

Finally, the fact that $(x+y_1, \dots, x+y_k)\in \mathbb{R}^k_{\text{sorted}}$ follows immediately from the fact that all $y_i$ are positive and $\frac{y_{i+1}}{y_i}\leq \delta <1$. 
\end{proof}

Let \(\mathbf{w} = (x+y_1, x+y_2, \dots, x+y_n ) \in \mathbb{R}^n\), where \(y_i\) satisfy the conditions in Proposition~\ref{proposition:constructing-interior-points-alt}. Consider any \(k\) ascending indices \(r_1 < \dots < r_k\) from \([n]\). Construct the point \(\mathbf{z}= (w_{r_1},\dots,w_{r_k}) = (x+y_{r_1},\dots,x+y_{r_k}) \in \mathbb{R}^k\). This point will also satisfy the conditions of Proposition~\ref{proposition:constructing-interior-points-alt}, and therefore \(\mathbf{z}\in \text{int}(P_0)\). This concludes the proof of Theorem~\ref{proposition:n-choose-k-points-in-one-polytope}. \end{proof}


\end{document}